\documentclass[twoside]{article}

\newif\ifarxiv
\arxivtrue

\ifarxiv
\usepackage[preprint]{aistats2027}
\else
\usepackage{aistats2027}
\fi

\usepackage{graphicx}
\usepackage{amssymb}
\usepackage{mathtools}
\usepackage{amsmath}
\usepackage{amsthm}
\usepackage{nicematrix}
\usepackage{makecell}
\DeclareMathOperator*{\argmax}{arg\,max}
\DeclareMathOperator*{\argmin}{arg\,min}

\newcommand{\dataset}[1]{\textbf{#1}}
\newtheorem{proposition}{Proposition}
\usepackage{nicematrix}
\usepackage{booktabs}
\usepackage{algorithm}
\usepackage{algorithmic} 
\usepackage{subcaption}
\usepackage{url}
\usepackage{multirow}
\usepackage{bbm}
\usepackage[round]{natbib}

\ifarxiv
\usepackage[colorlinks=true,
            citecolor=blue,
            linkcolor=blue,
            urlcolor=blue]{hyperref}
\fi

\begin{document}

%

%

\twocolumn[

\aistatstitle{NPBoost: Neural Processes with Gradient-Boosted Fixed Effects}

\aistatsauthor{
Andrea Nava$^{1,2,*}$ \And
Ken Rölli$^{1}$ \And
Armin Begic$^{1}$ \And
Fabio Sigrist$^{1}$
}
\vspace{0.2cm}
\aistatsaddress{
$^{1}$Seminar for Statistics, ETH Zurich, Switzerland \\ 
$^{2}$Lucerne University of Applied Sciences and Arts, Switzerland\\
$^{*}$Corresponding author: \texttt{navaan@ethz.ch}
}
]

\begin{abstract}
  Neural Processes (NPs) are model-based meta-learners that implicitly learn a stochastic process and adapt to a new task from a small context set. Most extensions of NPs focus on improving the neural network architecture. We instead develop an extension motivated by the shared hierarchical interpretation of meta-learning and mixed-effects models. Specifically, we introduce Neural Process Boosting (\textsc{NPBoost}), which decomposes structured response variability into tree-boosted fixed effects shared across tasks and NP random effects that capture stochastic task-to-task variation. We propose to train the two components jointly using a boosting algorithm in which an NP learns residual task-specific structure and a tree ensemble estimates common patterns across tasks. Across synthetic and real-world tabular meta-learning problems, this decomposition improves over a standard NP when the shared structure contains discontinuities or other irregular patterns that boosted trees can represent effectively.
\end{abstract}

\section{Introduction}

Meta-learning studies how information shared across a collection of related tasks can be used to adapt rapidly to a new task from only a small number of labeled observations.
Neural Processes (NPs) implement this idea by learning a distribution over functions from repeated context--target prediction problems \citep{garnelo2018neuralprocesses}:
given a context set of observed input--output pairs, an NP produces a predictive distribution at new target inputs without updating its global parameters. The same context-conditioned meta-learning principle motivates recent
tabular foundation models such as TabPFN and TabICL, which are pretrained
across large collections of synthetic tabular tasks and predict on a new
dataset from its labeled examples in a single forward pass
\citep{hollmann2025accurate,qu2025tabicl}.

Most developments in the Neural Process literature have focused on modifying the neural network architecture used to represent task-specific functions. 
For example, Attentive Neural Processes introduce target-dependent context aggregation \citep{kim2019iclr-attentive}, while convolutional variants encode translation equivariance \citep{Gordon2020Convolutional}.
These approaches improve how a single NP models each task.

We investigate a different direction.
Meta-learning can be interpreted hierarchically: observations belong to tasks, while the tasks themselves are drawn from a higher-level distribution learned during meta-training \citep{baxter1998theoretical,grant2018iclr-recasting}.
In this view, training across many tasks learns information that acts as a prior when adapting to a new task.

Mixed-effects models rely on a closely related hierarchical structure.
For a given task \(i\) with \(N_i\) observations and \(p\) features, let \(X_i \in \mathbb{R}^{N_i\times p}\) and \(y_i \in \mathbb{R}^{N_i}\) be features and responses.
Mixed-effects models decompose \(y_i\) into fixed effects shared across tasks and task-specific random effects:
\begin{equation}
    y_i
    =
    \underbrace{\mathstrut F(X_i)}_{\text{fixed effects}}
    +
    \underbrace{\mathstrut b_i(X_i)}_{\text{random effects}}
    + \underbrace{\mathstrut \varepsilon_i,}_{\text{observation noise}}
\label{eq:intro_mixed_model}
\end{equation}
where \(F, b_i: \mathbb{R}^p \to \mathbb{R}\) are functions and \(F(X_i)\) and \(b_i(X_i)\) denote their row-wise evaluations on \(X_i\).
The fixed-effects function \(F\) captures systematic structure shared across tasks, while
the random effects $b_i$ are task-specific realizations drawn from a common population-level
distribution $G_{\eta}$ parametrized by ${\eta}$.
Learning $\eta$ from many observed tasks and conditioning the resulting
distribution on observations from a new task corresponds exactly to the hierarchical
interpretation of meta-learning.
Standard NPs do not explicitly separate shared fixed effects from the
task-specific stochastic functions.
Instead, the complete response variation is represented through the
context-conditioned NP. In this paper, we suggest a different approach.
Rather than increasing the expressiveness of the network architecture, we introduce an explicit decomposition between structure shared across tasks and stochastic task-specific variation.

Specifically, we introduce \textsc{NPBoost}, which models the fixed-effects function $F$ by a gradient-boosted tree ensemble and the task-specific random effects $b_i$ by an NP.
This choice is well suited for tabular meta-learning: boosted trees are strong learners of irregular, non-linear, and discontinuous shared patterns \citep{friedman2001greedy,grinsztajn2022tree}, while the NP enables uncertainty-aware adaptation to an individual task from a small context set. Representative NPBoost predictions are shown
in Figure~\ref{fig:npboost_preview}.
\begin{figure*}[t]
    \centering
    \includegraphics[
        width=\textwidth
    ]{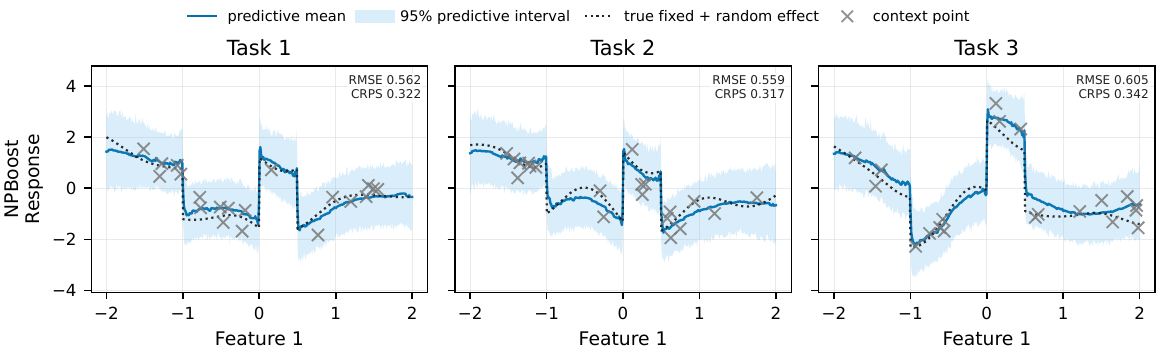}
    \caption{
        \textbf{NPBoost predictive fits in the one-dimensional reference
        setting.}
        Columns show predictions for three unseen test tasks. 
        Solid lines denote predictive means, shaded regions denote
        $95\%$ prediction intervals, and dotted lines show the true latent
        response functions $F+b_i$.
        Markers indicate the labeled observations used as context for few-shot prediction.
        The fitted predictions track the discontinuities shared across tasks
        while adapting to their smooth task-specific variation.
    }
    \label{fig:npboost_preview}
\end{figure*}
In our approach, the two components are coupled through a shared marginal predictive objective and trained jointly using a novel boosting algorithm.

In summary, our contributions are threefold: (i) we introduce an extension of Neural Processes, denoted as \textsc{NPBoost}, through a mixed-effects decomposition that separates shared structure from stochastic task-specific variation, and formulate the two components under a shared marginal predictive objective; (ii) we develop a training algorithm with context--target cross-fitting, enabling second-order boosting updates despite the context-dependent nature of NP predictions; (iii) we evaluate the proposed method on synthetic and real-world tabular meta-learning problems.

\section{Background and Related Work}
\label{s:background}

\subsection{Notation and Setup}
We consider $K$ related tasks.  
For task $i \in \{1,\dots,K\}$, we observe features
$X_i \in \mathbb{R}^{N_i \times p}$ and responses
$y_i \in \mathbb{R}^{N_i}$, where the number of observations
$N_i$ may vary across tasks. We write $x_{ij}\in\mathbb R^p$ and $y_{ij}\in\mathbb R$ for the feature
vector and response of observation $j\in\{1,\dots,N_i\}$ in task $i$,
respectively. We denote the complete dataset for task \(i\) by
\(\mathcal D_i=(X_i,y_i)\).
For each task, let $\mathcal{C}_i$ and $\mathcal{T}_i$ denote disjoint context and target index sets satisfying
$
    \mathcal{C}_i \cup \mathcal{T}_i
    =
    \{1,\dots,N_i\}
    $ and $
    \mathcal{C}_i \cap \mathcal{T}_i
    =
    \emptyset.
$
For an index set $\mathcal{A}$, we write
$(X_i^{\mathcal{A}},y_i^{\mathcal{A}})$ for the corresponding observations.
The context set
$(X_i^{\mathcal{C}_i},y_i^{\mathcal{C}_i})$
contains labeled observations from task $i$.
Given this context set and a set of target features $X_i^{\mathcal{T}_i}$,
the goal is to estimate the conditional predictive distribution of the corresponding target outputs \(y_i^{\mathcal T_i}\):
\begin{equation}
    p\!\left(
        y_i^{\mathcal{T}_i}
        \mid
        X_i^{\mathcal{T}_i},
        X_i^{\mathcal{C}_i},
        y_i^{\mathcal{C}_i}
    \right).
    \label{eq:meta_prediction_problem}
\end{equation}

We consider two prediction settings.
In \emph{within-task prediction}, the model predicts unseen observations from tasks observed during training.
In \emph{few-shot prediction}, the model predicts on unseen tasks using only a small labeled context set.

\subsection{Meta-Learning as Hierarchical Learning}
\label{ss:problem_setting}

Meta-learning can be expressed through a probabilistic hierarchical model.
Let $\tau_i$ denote a task-specific latent variable or function.
A generic formulation is
\begin{equation}
    \tau_i \overset{\mathrm{iid}}{\sim} p_{\eta}(\tau),
    \qquad
    \mathcal{D}_i \mid \tau_i
    \sim
    p_{\omega}(\mathcal{D}_i \mid \tau_i),
    \label{eq:hierarchical_meta_learning}
\end{equation}
where $\eta$ parameterizes the higher-level distribution over tasks and
$\omega$ contains parameters shared across tasks. The collection of training tasks
$\mathcal{D}_{1:K}=\{\mathcal{D}_1,\dots,\mathcal{D}_K\}$
provides repeated realizations from this hierarchy and is used to learn
the global parameters $(\omega,\eta)$. For a new task with context set
$\mathcal{D}_{\star}^{\mathcal C}
=
(X_{\star}^{\mathcal C},y_{\star}^{\mathcal C})$,
the learned higher-level distribution acts as a prior and is updated to
$
    p_{\widehat\omega,\widehat\eta}
    \left(
        \tau_{\star}
        \mid
        \mathcal{D}_{\star}^{\mathcal C}
    \right)
$.
Predictions at target inputs $X_{\star}^{\mathcal T}$ are then obtained by marginalizing the
task-specific quantity,
\begin{equation}
\begin{aligned}
    &p_{\widehat\omega,\widehat\eta}
    \left(
        y_{\star}^{\mathcal T}
        \mid
        X_{\star}^{\mathcal T},
        \mathcal{D}_{\star}^{\mathcal C}
    \right)
    \\
    &\quad =
    \int
    p_{\widehat\omega}
    \left(
        y_{\star}^{\mathcal T}
        \mid
        X_{\star}^{\mathcal T},
        \tau_{\star}
    \right)
    p_{\widehat\omega,\widehat\eta}
    \left(
        \tau_{\star}
        \mid
        \mathcal{D}_{\star}^{\mathcal C}
    \right)
    \,\mathrm{d}\tau_{\star}.
\end{aligned}
\label{eq:hierarchical_meta_prediction}
\end{equation}

Thus, meta-training learns a higher-level distribution over task-specific quantities, while the context
set performs task-specific adaptation.

Prior-data fitted networks (PFNs), now among the most prominent
model-based meta-learning approaches for tabular data, make this interpretation explicit \citep{2022iclr-transformers}. They specify a prior through a synthetic task-generating process, sample
meta-training datasets from this prior, and train a neural network to
approximate the corresponding posterior predictive distribution for a new
context set.
TabPFN applies this approach to tabular prediction
\citep{hollmann2025accurate}.

This interpretation also applies to optimization-based approaches:
\citet{grant2018iclr-recasting} show that Model-Agnostic Meta-Learning (MAML) can be
viewed as approximate hierarchical Bayesian inference, with the learned
initialization acting as a population-level prior.

\subsection{Neural Processes}
\label{ss:background_on_nps}

Neural Processes (NPs) are model-based meta-learners that use neural networks to parameterize distributions over functions
\citep{garnelo2018conditional,garnelo2018neuralprocesses}.
The Neural Process family includes conditional and latent variants \citep{dubois2020npf}.
We focus on latent NPs, which represent uncertainty about the task-specific function through a finite-dimensional global latent variable, and refer to them simply as NPs.

A standard latent NP consists of an encoder and a decoder.
The encoder is built from two neural networks, with parameter vectors  \(\phi_h,\phi_g\).
We write \(\phi=(\phi_g, \phi_h)\) for their combined parameters.
The first network \(h_{\phi_h}:\mathbb{R}^p\times\mathbb{R}\rightarrow\mathbb{R}^{d_r}\) maps every context observation to a representation
\begin{equation}
    r_{ij}
    =
    h_{\phi_h}(x_{ij},y_{ij}),
    \qquad j\in\mathcal{C}_i.
\end{equation}
A permutation-invariant aggregation, here the mean, produces a task-level summary,
\begin{equation}
    \bar r_i
    =
    \frac{1}{|\mathcal{C}_i|}
    \sum_{j\in\mathcal{C}_i} r_{ij}.
\end{equation}
The second network \(g_{\phi_g} = (g_{\phi_g,\mu},\,g_{\phi_g,v})\) with \(g_{\phi_g,\mu},\,g_{\phi_g,v}:\mathbb{R}^{d_r}\to\mathbb{R}^{d_z}\)
maps the aggregated representation to
\begin{equation}
    \mu_{\phi,i}
    =
    g_{\phi_g,\mu}(\bar r_i),
    \qquad
    \sigma^2_{\phi,i}
    =
    \exp\!\left\{g_{\phi_g,v}(\bar r_i)\right\},
    \label{eq:encoder_outputs}
\end{equation}
where \(\exp \{ \cdot \}\) is applied element-wise; this defines a Gaussian distribution with diagonal covariance for a global task latent variable $Z_i\in\mathbb{R}^{d_z}$:
\begin{equation}
    q_\phi\!\left(
        Z_i
        \mid
        X_i^{\mathcal{C}_i},
        y_i^{\mathcal{C}_i}
    \right)
    =
    \mathcal{N}\!\left(
        \mu_{\phi,i},
        \operatorname{diag}(\sigma_{\phi,i}^2)
    \right).
    \label{eq:np_latent_distribution}
\end{equation}
The decoder is built from a single neural network and a global variance
parameter \(\sigma_e^2>0\). 
We denote all decoder parameters by \(\theta\).
Conditional on a draw $z_i$, the decoder network maps each target input
$x_{ij}$ together with $z_i$ to the mean and variance of a Gaussian predictive distribution through the functions \(\mu_\theta: \mathbb{R}^p \times \mathbb{R}^{d_z} \to \mathbb{R}\) and 
\(\sigma_\theta^2: \mathbb{R}^p \times \mathbb{R}^{d_z} \to \mathbb{R}_{>0}\).
We parameterize the predictive variance as
\begin{equation}
\sigma_\theta^2(x_{ij},z_i) = v_\theta(x_{ij},z_i)+\sigma_e^2,
\label{eq:np_conditional_variance}
\end{equation}
where \(\sigma_e^2\) is shared across all tasks and observations.
Assuming conditional independence given $z_i$,
\begin{equation}
    p_\theta\!\left(
        y_i^{\mathcal{T}_i}
        \mid
        X_i^{\mathcal{T}_i},
        z_i
    \right)
    =
    \prod_{j\in\mathcal{T}_i}
    \mathcal{N}\!\left(
        y_{ij};
        \mu_\theta(x_{ij},z_i),
        \sigma_\theta^2(x_{ij},z_i)
    \right).
    \label{eq:np_conditional_distribution}
\end{equation}
The same latent realization $z_i$ is shared across all target observations from task $i$.
Although the decoder distribution is factorized conditional on $z_i$, marginalizing the shared latent variable induces dependence between observations from the same task.

The context-conditioned predictive distribution is
\begin{equation}
\begin{aligned}
    &p\!\left(
        y_i^{\mathcal{T}_i}
        \mid
        X_i^{\mathcal{T}_i},
        X_i^{\mathcal{C}_i},
        y_i^{\mathcal{C}_i}
    \right)
    \\
    &\quad =
    \int
    p_\theta\!\left(
        y_i^{\mathcal{T}_i}
        \mid
        X_i^{\mathcal{T}_i},
        z
    \right)
    q_\phi\!\left(
        z
        \mid
        X_i^{\mathcal{C}_i},
        y_i^{\mathcal{C}_i}
    \right)
    \,\mathrm{d}z.
\end{aligned}
\label{eq:np_predictive_distribution}
\end{equation}
For a continuous latent distribution, Equation~\eqref{eq:np_predictive_distribution} defines a continuous mixture of Gaussian distributions and is generally not available in closed form \citep{dubois2020npf}. Marginalizing over the latent variable can yield a
non-Gaussian predictive distribution, even though the decoder distribution
is Gaussian conditional on the latent variable. Furthermore, NPs learn an implicit distribution over task-specific functions
without specifying an explicit covariance kernel. The encoder and decoder
learn this stochastic-process representation from the collection of
training tasks.

\paragraph{Architectural extensions.} Numerous variants of NPs have been proposed to modify how
context information is encoded or processed. For example, a limitation of mean aggregation is that every target receives the same context summary.
Attentive Neural Processes address this limitation by adding a deterministic path in which cross-attention allows each target input to weight the context observations differently \citep{kim2019iclr-attentive}.
Convolutional Neural Processes encode translation equivariance and locality
\citep{Gordon2020Convolutional}.
Recurrent Neural Processes extend NPs to sequential and dynamical settings
\citep{willi2019recurrent}. Dimension-Agnostic Neural Processes accommodate tasks with varying input and output dimensions
\citep{lee2025dimension}.
Recently, Incremental Transformer NPs introduced efficient updates for streaming context observations
\citep{mortimer2026incremental}.

These methods primarily extend NPs by modifying how context information is represented and processed.
Our work instead changes the probabilistic decomposition of the response and is therefore complementary to these developments.

\raggedbottom
\subsection{Mixed-Effects Models}
\label{ss:mixed_effects}

Mixed-effects models are used for modeling hierarchically grouped data. In our setting, each group corresponds to a task.
For tasks $i=1,\dots,K$, a general formulation is
\begin{equation}
    b_i \overset{\mathrm{iid}}{\sim} G_{\eta},
    \qquad
    y_i \mid b_i, X_i
    \sim
    p_{\psi}\!\left(
        y_i \mid F(X_i), b_i
    \right),
    \label{eq:mixed_effect_hierarchy}
\end{equation}
where $F$ is a fixed-effects component shared across tasks, $b_i$ is a
task-specific random-effects component, and $G_{\eta}$ is their common
population distribution
\citep{lindley1972bayes,Bates2000ME}.
The fixed effects capture systematic population-level structure,
whereas the random effects describe task-specific deviations.

The parameters of the random-effects distribution $\eta$, the fixed-effects function $F$, and potentially additional likelihood parameters $\psi$ are commonly estimated
from the observed tasks by maximizing the marginal likelihood,
\begin{equation}
    (\widehat F,\widehat\eta,\widehat\psi)
    \in
    \argmax_{F,\eta,\psi}
    \sum_{i=1}^{K}
    \log
    \int
    p_{\psi}\!\left(
        y_i \mid F(X_i), b_i
    \right)
    \,\mathrm{d}G_{\eta}(b_i).
    \label{eq:mixed_effect_marginal_likelihood}
\end{equation}
This is an empirical-Bayes procedure
\citep{robbins1964empirical}: the higher-level prior distribution
$G_{\eta}$ is not fixed in advance, but learned from the collection
of tasks. The fitted distribution $G_{\widehat\eta}$ can subsequently be reused
as a prior for a new task.
This is directly analogous to the hierarchical interpretation of
task-based meta-learning in Section~\ref{ss:problem_setting}. Mixed-effects models can represent substantially more general dependence
structures, including nested, crossed, spatial, and temporal random
effects.
The connection considered in this paper concerns the simpler
setting in which tasks are independent realizations from a common
population distribution. Similarly, \citet{slavutsky2026robust} motivate explicit mixed-effects modeling
of environment-specific variation as an alternative to invariance learning
for zero-shot generalization to unseen environments.

\section{Neural Process Boosting}
\label{s:npboost}

\subsection{Model Formulation}
\label{ss:model_formulation}

For each task $i=1,\dots,K$, we assume the mixed-effects model
\begin{equation}
    y_i
    =
    F(X_i)
    +
    b_i(X_i)
    +
    \varepsilon_i,
    \qquad
    \varepsilon_i
    \sim
    \mathcal{N}\!\left(0,\sigma_e^2 I_{N_i}\right),
    \label{eq:mixed_effects_formulation}
\end{equation}
where  $F:\mathbb{R}^p\rightarrow\mathbb{R}$ is a
shared fixed-effects function,
$b_i: \mathbb{R}^p\rightarrow\mathbb{R}$ is a task-specific random-effects function, 
and $\varepsilon _i$ is an independent noise term.
We write $F(X_i)$ and $b_i(X_i)$ for the row-wise evaluation of $F$ and $b_i$ on $X_i$.
We represent $F$ by an ensemble of regression trees and the distribution of
$b_i$ by a latent NP. 
In contrast to conventional mixed-effects formulations, we do not impose a
centering constraint such as $\mathbb E[b_i(x)]=0$. 
We therefore treat $F$ and $b_i$ as an additive decomposition for prediction, not as separately
identifiable components.

Conditional on the task-level latent variable $Z_i=z_i$, the NP decoder
assigns a Gaussian distribution to the random effects at the target inputs,
\begin{equation}
    b_i(X_i^{\mathcal T_i})\mid z_i
    \sim
    \mathcal{N}\!\left(
        \mu_{\theta}(X_i^{\mathcal T_i}, z_i),
        V_{\theta}(X_i^{\mathcal T_i}, z_i)
    \right),
    \label{eq:conditional_random_effect}
\end{equation}
which is identical to the NP conditional decoder distribution from Equation~\eqref{eq:np_conditional_distribution},
except that it is defined for $b_i$ rather than $y_i$.
As for $F$ in Equation~\eqref{eq:mixed_effects_formulation},
$\mu_\theta(X_i^{\mathcal T_i},z_i)$ and
$v_\theta(X_i^{\mathcal T_i},z_i)$ denote the
row-wise evaluation of the decoder functions, and
$V_\theta(X_i^{\mathcal T_i},z_i)
=\operatorname{diag}\!\left(v_\theta(X_i^{\mathcal T_i},z_i)\right)$.
Since $b_i(X_i^{\mathcal T_i})$ and $\varepsilon_i^{\mathcal T_i}$ are
conditionally independent and Gaussian, integrating out the random effects
gives
\begin{equation}
\begin{aligned}
    &p_{\theta,F}\!\left(
        y_i^{\mathcal T_i}
        \mid
        X_i^{\mathcal T_i},
        z_i
    \right)
    \\
    &\quad =
    \mathcal{N}\!\left(
        y_i^{\mathcal T_i};
        F(X_i^{\mathcal T_i})+\mu_{\theta}(X_i^{\mathcal T_i}, z_i),
        \Sigma_{\theta}(X_i^{\mathcal T_i}, z_i)
    \right),
\end{aligned}
\label{eq:conditional_npboost_likelihood}
\end{equation}
with $
    \Sigma_{\theta}(X_i^{\mathcal T_i}, z_i)
    =
    V_{\theta}(X_i^{\mathcal T_i}, z_i)
    +
    \sigma_e^2 I_{|\mathcal T_i|}.
    \label{eq:conditional_npboost_covariance}
$

The task-level latent variable is subsequently marginalized using the
context-conditioned encoder distribution:
\begin{equation}
\begin{aligned}
    &p_{\theta,\phi,F}\!\left(
        y_i^{\mathcal T_i}
        \mid
        X_i^{\mathcal T_i},
        X_i^{\mathcal C_i},
        y_i^{\mathcal C_i}
    \right)
    \\
    &\quad =
    \int
    p_{\theta,F}\!\left(
        y_i^{\mathcal T_i}
        \mid
        X_i^{\mathcal T_i},
        z
    \right)
    q_{\phi,F}\!\left(
        z
        \mid
        X_i^{\mathcal C_i},
        y_i^{\mathcal C_i}
    \right)
    \,\mathrm{d}z.
\end{aligned}
\label{eq:npboost_predictive_distribution}
\end{equation}
Here, the notation $q_{\phi,F}$ indicates that the context representation
depends on the current fixed-effects function, as made explicit in
Section~\ref{ss:alternating_training}. 

\subsection{Monte Carlo Predictive Objective}
\label{ss:training_objective}

The integral in
Equation~\eqref{eq:npboost_predictive_distribution} is generally unavailable
in closed form.
Following the direct Monte Carlo approximation of the
context-conditioned predictive likelihood used for latent Neural Processes
by \citet{foong2020npml}, we approximate the integral using samples from
the encoder distribution.
For a context--target split $(\mathcal C_i,\mathcal T_i)$, we draw
\begin{equation}
    z_{i\ell}
    \sim
    q_{\phi,F}\!\left(
        Z_i
        \mid
        X_i^{\mathcal C_i},
        y_i^{\mathcal C_i}
    \right),
    \qquad
    \ell=1,\dots,L.
    \label{eq:latent_mc_samples}
\end{equation}

For each latent sample $z_{il}$, define the conditional log-likelihood
\begin{equation}
\begin{aligned}
    s_{i\ell}(F,\theta)
    =
    \log
    \mathcal{N}\!\left(
        y_i^{\mathcal T_i};
        F(X_i^{\mathcal T_i})+\mu_{\theta,i\ell},
        \Sigma_{\theta,i\ell}
    \right),
\end{aligned}
\label{eq:conditional_log_likelihood}
\end{equation}
where $
    \mu_{\theta,i\ell}
    =
    \mu_{\theta}(X_i^{\mathcal T_i}, z_{i\ell})$ and $
    \Sigma_{\theta,i\ell}
    =
    \Sigma_{\theta}(X_i^{\mathcal T_i}, z_{i\ell}).$

For a realized context--target split and latent samples, the marginal log predictive density for task $i$ is approximated by
\begin{equation}
    \widehat{\ell}_i(F,\theta,\phi)
    =
    \log\left[
        \frac{1}{L}
        \sum_{\ell=1}^{L}
        \exp\!\left\{
            s_{i\ell}(F,\theta)
        \right\}
    \right].
    \label{eq:mc_log_predictive_density}
\end{equation}

Since context--target splits and latent variables are resampled during
training, NPBoost targets the context-averaged Monte Carlo predictive risk
\begin{equation}
    \widehat{\mathcal R}_L(F,\theta,\phi)
    =
    -
    \sum_{i=1}^{K}
    \mathbb E_{\mathcal C_i}
    \mathbb E_{z_{i1:L}\mid\mathcal C_i}
    \left[
        \widehat{\ell}_i(F,\theta,\phi)
    \right],
    \label{eq:npboost_empirical_risk}
\end{equation}
where the first expectation is with respect to the context-sampling scheme
used during training and the second is with respect to the encoder
distribution in Equation~\eqref{eq:latent_mc_samples}.

In practice, stochastic optimization samples a mini-batch of tasks,
constructs a context--target split within each sampled task, and draws
$L$ latent variables per task, yielding a stochastic estimate of
Equation~\eqref{eq:npboost_empirical_risk}.

\subsection{Training using a Boosting Algorithm}
\label{ss:alternating_training}

Let the fixed-effects ensemble after boosting round $t$ be
\begin{equation}
    F^{(t)}(\cdot)
    =
    F^{(0)}(\cdot)
    +
    \nu\sum_{s=1}^{t} f^{(s)}(\cdot),
    \label{eq:boosting_ensemble}
\end{equation}
where $f^{(s)}\in\mathcal H$ are regression-tree base learners,
$\nu>0$ is the boosting learning rate, and $F^{(0)}(\cdot)\equiv0$.

\paragraph{NP update.}
Holding the current fixed-effects function $F^{(t-1)}$ fixed, define
\begin{equation}
    \widetilde y_i^{(t-1)}
    =
    y_i-F^{(t-1)}(X_i),
    \qquad i=1,\dots,K.
    \label{eq:np_training_residuals}
\end{equation}
The encoder and decoder are evaluated using these residualized responses $\widetilde y_i^{(t-1)}$, and $(\theta^{(t-1)},\phi^{(t-1)})$ are updated to
$(\theta^{(t)},\phi^{(t)})$ by stochastic gradient minimization of
Equation~\eqref{eq:npboost_empirical_risk}.
We run this NP optimization for multiple epochs.
In every epoch, we generate new context--target splits.

\paragraph{Tree update.}
We next update \(F\) using a boosting step while holding the updated NP parameters fixed. For latent sample $\ell$ and target observation $j$, define
\begin{equation}
    e_{ij\ell}^{(t)}
    =
    y_{ij}
    -
    F^{(t-1)}(x_{ij})
    -
    \mu_{\theta,ij\ell}^{(t)},
    ~~
    \left(\sigma_{ij\ell}^{(t)}\right)^2
    =
    v_{\theta,ij\ell}^{(t)}+\left(\sigma_e^{(t)}\right)^2.
    \label{eq:boosting_component_residual}
\end{equation}
The normalized likelihood weight associated with latent sample $\ell$ is
\begin{equation}
    w_{i\ell}^{(t)}
    =
    \frac{\exp\{s_{i\ell}(F^{(t-1)},\theta^{(t)})\}}
    {\sum_{q=1}^{L}\exp\{s_{iq}(F^{(t-1)},\theta^{(t)})\}}.
    \label{eq:latent_likelihood_weights}
\end{equation}
The gradient of the negative predictive log-likelihood with respect to the fixed-effects prediction is
\begin{equation}
    g_{ij}^{(t)}
    =
    -
    \sum_{\ell=1}^{L}
    w_{i\ell}^{(t)}
    \frac{e_{ij\ell}^{(t)}}{\left(\sigma_{ij\ell}^{(t)}\right)^2}.
    \label{eq:boosting_gradient}
\end{equation}
This gradient is used for constructing a tree update. Note that the sampled latent variables and NP outputs, evaluated at \(F^{(t-1)}\), are also held fixed in \eqref{eq:boosting_gradient}.
Thus, although the encoder distribution depends on \(F\) through the residualized responses, we differentiate Equation~\eqref{eq:npboost_empirical_risk} only through the explicit fixed-effects term \(F(X_i^{\mathcal T_i})\) in the target mean.

First-order gradient boosting fits the negative functional gradient
$-g_{ij}^{(t)}$, which is a weighted average of residuals relative to the
NP means. The weights $w_{i\ell}^{(t)}$ are determined by the joint target
likelihood of latent realization $\ell$, while
$1/(\sigma_{ij\ell}^{(t)})^2$ weights by predictive precision.
Since $w_{i\ell}^{(t)}$ depends on all target observations in task $i$, the
gradient for observation $j$ also depends on the other target observations.
Thus, through the shared task-level latent variable $z_i$, the boosting
updates inherit the within-task dependence modeled by the NP rather than
being based on independent observation-wise residuals.

For Newton boosting, we additionally use the diagonal Hessian
$h_{ij}^{(t)}$ of the negative predictive log-likelihood. Its expression and derivation are provided in
Appendix~\ref{app:hessian}.

Given the gradient and diagonal-Hessian vectors, the next tree is
fitted as
\begin{equation}
    f^{(t)}
    \in
    \argmin_{f\in\mathcal H}
    \sum_{i=1}^{K}
    \sum_{j=1}^{N_i}
    h_{ij}^{(t)}
    \left(
        -\frac{g_{ij}^{(t)}}{h_{ij}^{(t)}}
        -
        f(x_{ij})
    \right)^2,
    \label{eq:newton_tree_objective}
\end{equation}
and the fixed-effects ensemble is updated according to
\begin{equation}
    F^{(t)}(\cdot)
    =
    F^{(t-1)}(\cdot)
    +
    \nu f^{(t)}(\cdot).
    \label{eq:tree_ensemble_update}
\end{equation}

A difficulty in alternating the NP and tree updates is that NP
predictions depend on the context set, so a single context--target
split provides derivatives only for its target observations. We thus
use multiple splits to obtain derivatives for all training observations.

\subsection{Context--Target Cross-Fitting} 
\label{ss:context_target_gradient}

For task $i$, let $n_{c,i}$ denote the NP training context size.
Assuming for simplicity that $n_{c,i}$ divides $N_i$, randomly partition
$\{1,\dots,N_i\}$ into $B_i = N_i/n_{c,i}$ disjoint folds
$\mathcal S_{i}^{(1)},\dots,\mathcal S_{i}^{(B_i)}$,
each containing $n_{c,i}$ observations.
For $b=1,\dots,B_i$, define the context--target split
\begin{equation}
    \mathcal C_{i}^{(b)}=\mathcal S_{i}^{(b)},
    \qquad
    \mathcal T_{i}^{(b)}
    =
    \{1,\dots,N_i\}\setminus\mathcal S_{i}^{(b)}.
    \label{eq:context_target_cross_fitting}
\end{equation}
Thus, each fold is used once as the context set and its complement as the
target set.

Let $g_{ij}^{(b)}$ and $h_{ij}^{(b)}$ denote the gradient and
diagonal-Hessian contributions for observation $j$ obtained from split $b$.
Since every observation belongs to exactly one context fold, it appears in
the target set of the remaining $B_i-1$ splits.
We therefore assemble the derivatives as
\begin{equation}
    g_{ij}
    =
    \frac{1}{B_i-1}
    \sum_{b:\,j\in\mathcal T_{i}^{(b)}}
    g_{ij}^{(b)},
    \qquad
    h_{ij}
    =
    \frac{1}{B_i-1}
    \sum_{b:\,j\in\mathcal T_{i}^{(b)}}
    h_{ij}^{(b)}.
    \label{eq:cross_fitted_derivatives}
\end{equation}

If $n_{c,i}$ does not divide $N_i$, the final context fold contains the
remaining observations. In this case, each derivative is normalized by the
number of splits in which the corresponding observation appears as a target.

\begin{proposition}[Unbiased cross-fitted derivatives]
\label{prop:crossfit_unbiased}
Assume that the folds are obtained from a uniformly random partition.
For any observation $j$, let $g_{ij}(\mathcal C_i)$ denote the gradient
contribution obtained when $\mathcal C_i$ is used as context and $j$ belongs
to the corresponding target set. Then the cross-fitted gradient in
Equation~\eqref{eq:cross_fitted_derivatives} satisfies
\begin{equation}
    \mathbb E[g_{ij}]
    =
    \mathbb E\!\left[
        g_{ij}(\mathcal C_i)
        \,\middle|\,
        |\mathcal C_i|=n_{c,i},\;
        j\notin\mathcal C_i
    \right].
\end{equation}
On the left, the expectation is over the random partition and the latent draws; on the right,
$\mathcal C_i$ is uniformly distributed over all subsets of size $n_{c,i}$
that do not contain $j$. The same holds for the diagonal
Hessian.
\end{proposition}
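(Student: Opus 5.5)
The argument rests on linearity of expectation combined with exchangeability of the uniformly random partition. Fix task $i$ and observation $j$, and assume for the main argument that $n_{c,i}$ divides $N_i$, so that $B_i=N_i/n_{c,i}$ and every fold has exactly $n_{c,i}$ elements. Let $b_j$ denote the index of the fold containing $j$. By construction, $j\in\mathcal T_i^{(b)}$ exactly when $b\neq b_j$, so the sum in Equation~\eqref{eq:cross_fitted_derivatives} always has exactly $B_i-1$ terms. Hence
\begin{equation*}
    g_{ij}=\frac{1}{B_i-1}\sum_{b\neq b_j} g_{ij}\bigl(\mathcal S_i^{(b)}\bigr),
\end{equation*}
where $g_{ij}(\mathcal S_i^{(b)})$ is the gradient contribution computed with context $\mathcal S_i^{(b)}$ and with latent draws made independently given that context. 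Because the normalization is deterministic, taking expectations gives
\begin{equation*}
    \mathbb E[g_{ij}]=\frac{1}{B_i-1}\,\mathbb E\Bigl[\sum_{b\neq b_j}\mathbb E\bigl[g_{ij}(\mathcal S_i^{(b)})\mid \mathcal S_i^{(b)}\bigr]\Bigr].
\end{equation*}
The inner conditional expectation over the latent draws depends only on the realized context set. This relies on the NP parameters $\theta,\phi$ and on $F^{(t-1)}$ being held fixed during the tree update. Define $\bar g_{ij}(C)=\mathbb E[g_{ij}(C)\mid C]$ as a deterministic function on subsets $C$ of size $n_{c,i}$ with $j\notin C$.

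Next I describe the marginal law of each context fold that excludes $j$. A uniformly random ordered partition into $B_i$ equal folds is invariant under permutations of $\{1,\dots,N_i\}$ that fix $j$. So, conditional on $b_j$ and for any $b\neq b_j$, the fold $\mathcal S_i^{(b)}$ is uniformly distributed over all $n_{c,i}$-subsets of $\{1,\dots,N_i\}\setminus\{j\}$. Concretely, for any permutation $\pi$ fixing $j$, the partition $\pi(\mathcal S)$ has the same law as $\mathcal S$, and such permutations act transitively on these subsets. It follows that $\mathbb E[\bar g_{ij}(\mathcal S_i^{(b)})]=\mathbb E[\bar g_{ij}(\mathcal C_i)\mid |\mathcal C_i|=n_{c,i},\,j\notin\mathcal C_i]$ for each of the $B_i-1$ terms. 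The terms are dependent, but this does not matter because only linearity of expectation is used. Averaging the $B_i-1$ identical expectations yields the claim. The argument for $h_{ij}$ is the same, since nothing above uses the functional form of the gradient.

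The main obstacle is handling the randomness correctly rather than any calculation. First, the claim that the sum has exactly $B_i-1$ terms must hold deterministically, and this needs the equal-fold-size assumption. Second, one has to justify that the latent draws for split $b$ depend on the partition only through $\mathcal S_i^{(b)}$, which is what allows the tower property to be applied to each term. If $n_{c,i}$ does not divide $N_i$, the last fold has a different size. The symmetry argument then only gives uniformity over context sets of the size actually realized, so I would restrict the proposition to the divisible case, as its hypothesis $|\mathcal C_i|=n_{c,i}$ implicitly does, and note this restriction as a remark.
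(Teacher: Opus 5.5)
Your proof is correct and follows essentially the same route as the paper. Both arguments average out the latent draws via the tower property to obtain $\bar g_{ij}(\mathcal C)$, then combine linearity of expectation with the fact that each context fold excluding $j$ is uniform over the $n_{c,i}$-subsets not containing $j$. The only difference is bookkeeping: the paper sums over all $B_i$ folds with indicators $\mathbf 1\{j\notin\mathcal C_i^{(b)}\}$ and uses $\Pr(j\notin\mathcal C_i^{(b)})=(B_i-1)/B_i$, whereas you condition on the fold containing $j$ and justify the conditional uniformity explicitly by permutation invariance.
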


Proposition~\ref{prop:crossfit_unbiased} shows that the cross-fitted
gradient is an unbiased estimator of the gradient contribution for
observation $j$, averaged over context sets of the chosen size.
Importantly, by choosing $n_{c,i}$ equal to the context size used during NP training,
this averaging is taken with respect to the same context-sampling
distribution that enters the predictive risk in
Equation~\eqref{eq:npboost_empirical_risk}.
A proof is provided in Appendix~\ref{app:proof_prop1}.
Finally, the complete training algorithm is given in Appendix~\ref{app:npboost_training}.

\subsection{Prediction}
\label{ss:prediction}

Given fitted parameters
$(\widehat F,\widehat\theta,\widehat\phi)$ and a context set
$(X_i^{\mathcal C_i},y_i^{\mathcal C_i})$, we draw
\begin{equation}
    z_{i\ell}
    \sim
    q_{\widehat\phi,\widehat F}\!\left(
        Z_i
        \mid
        X_i^{\mathcal C_i},
        y_i^{\mathcal C_i}
    \right),
    \qquad
    \ell=1,\dots,L.
\end{equation}

For each latent sample, the predictive distribution at the target inputs is
Gaussian with diagonal covariance. 
The full predictive distribution is approximated by an equally weighted
Gaussian mixture
\begin{equation}
    \frac{1}{L}
    \sum_{\ell=1}^{L}
    \mathcal N\!\left(
        y_i^{\mathcal T_i};
        \widehat F(X_i^{\mathcal T_i})
        +
        \mu_{\widehat\theta}(X_i^{\mathcal T_i}, z_{i\ell}),
        \Sigma_{\widehat\theta}(X_i^{\mathcal T_i}, z_{i\ell})
    \right).
\label{eq:npboost_prediction}
\end{equation}

For point prediction, we use the mean of this mixture.
To obtain samples from the full predictive distribution, we first draw the
latents as above and then sample from the corresponding conditional
Gaussians.
This captures uncertainty both over the task-level latent
variable and conditional on each latent realization.

\subsection{Attention-Based Extension}
\label{ss:anpboost}

We also consider an attentive variant of NPBoost, denoted
\textsc{ANPBoost}, in which the standard NP component is
replaced by an Attentive Neural Process (ANP) \citep{kim2019iclr-attentive}.
The attentive implementation augments the latent path with a deterministic
path based on multi-head cross-attention, using eight attention heads.
The model formulation, Monte Carlo objective, alternating optimization,
context--target cross-fitting, and prediction procedure remain
unchanged. Further details are presented in Appendix \ref{app:implementation_details}.

\section{Experiments}
\label{s:experiments}

We evaluate NPBoost and ANPBoost on synthetic and real-world tabular datasets.
Our experiments address three questions: (i) can the mixed-effects decomposition improve over a standard NP when tasks share structure? (ii) when does cross-attention improve NP and NPBoost? (iii) how does NPBoost compare with mixed-model and gradient-boosted tree baselines?
\subsection{Prediction and Evaluation Protocol} 
\label{ss:prediction_scenarios}
We consider two prediction scenarios and train and select a separate model for each.
\emph{Within-task prediction} measures generalization to held-out observations from tasks seen during training. 
\emph{Few-shot prediction} measures adaptation to previously unseen tasks from a small labeled context set.
For within-task prediction, we split the observations of every task into
training, validation, and test subsets containing $50\%$, $25\%$, and
$25\%$ of the observations, respectively.
For few-shot prediction, we instead partition tasks into disjoint training,
validation, and test sets. Each unseen validation or test task provides a
small labeled context set, with predictions evaluated on the remaining
observations.

We evaluate point predictions using the RMSE and probabilistic predictions using the CRPS \citep{gneiting2007strictly}.
For NP-based models, the CRPS is estimated from samples from the predictive
distribution. Further details are provided in Appendix~\ref{app:evaluation_criteria}.

We compare NPBoost and ANPBoost with the standard and attentive NP baselines, and with gradient-boosted
trees including the task identifier as a categorical feature (Task-ID GBT)
\citep{ke2017lightgbm}. We further consider a Gaussian
process model with linear fixed effects (GPLinear) and a linear mixed-effects
model with random intercepts and slopes (LME) \citep{Bates2000ME}.
Finally, we compare with TabICLv2 \citep{qu2026tabiclv2} in two
configurations: TabICL\_T conditions only on observations from the current
task, whereas TabICL\_P pools observations across tasks and includes the task
identifier as a categorical feature.
Implementation details and hyperparameter tuning with early
stopping are described in
Appendices~\ref{app:implementation_details} and~\ref{app:parameter_tuning}. 

\subsection{Synthetic Datasets}
\label{ss:synthetic_datasets}

We generate data from Equation~\eqref{eq:mixed_effects_formulation}. 
The inputs are sampled independently as
$x_{ij}\sim\operatorname{Unif}([-2,2]^d)$, with $d\in\{1,2\}$.
The shared fixed-effects function $F$ is piecewise constant, with
four regions for $d=1$ and seven regions for $d=2$, and is scaled such that
$\operatorname{Var}\!\left(F(x_{ij})\right)=1.$
The task-specific effects are independent draws from a zero-mean Gaussian process with RBF kernel, and the observation noise satisfies
    $\varepsilon_{ij}\sim\mathcal N(0,0.25)$.
Each dataset contains $K=600$ tasks with $N_i = 100$ observations per task.
For few-shot prediction, each validation and test task provides $20$ labeled context points.
The data-generation and splitting procedure is independently repeated for five seeds.

As an ablation, we consider a setting in which the fixed-effects component is
removed. Since the task-specific functions are Gaussian processes, the resulting data-generating model
corresponds to the standard GP regression benchmark commonly used in the
Neural Process literature \citep{kim2019iclr-attentive, mortimer2026incremental}.
Complete results are reported in Appendix~\ref{app:complete_results}.
We additionally consider robustness experiments that vary the number of
tasks, the smoothness of the task-specific process, the observation-noise
distribution, and the presence of irrelevant features. More details are provided in Appendices~\ref{app:robust}
and~\ref{app:synthetic_data_generation}.

\subsection{Real-World Datasets}
\label{ss:real_datasets}

We evaluate the methods on four real-world datasets from different domains: \dataset{cars}~\citep{reese2021cars}, \dataset{Spotify}~\citep{dslc2024tidytuesday}, \dataset{cows}~\citep{diggle2002analysis}, and \dataset{bikes}~\citep{uci2020seoulbike}.
Complete feature definitions, preprocessing steps, and data-splitting procedures are provided in Appendix~\ref{app:real_world_datasets}.

\section{Results and Discussion}
\label{s:results} 

\subsection{Results for Synthetic Datasets}
\label{ss:results_synthetic_data}

Table~\ref{tab:synthetic_results_main} reports results for the reference
synthetic experiment. Complete results for all synthetic settings and
additional baselines are provided in Appendix~\ref{app:complete_results}.

\begin{table*}[ht!]
\centering
\caption{\textbf{Predictive performance on the synthetic reference experiment.} Each entry reports the mean RMSE / mean CRPS over \(5\) seeds, with standard errors shown in parentheses below in the same order. The lowest mean RMSE and CRPS in each row are shown in bold, and \textemdash{} indicates an unavailable metric.}
\label{tab:synthetic_results_main}
\small
\resizebox{\textwidth}{!}{%
\begin{NiceTabular}{ll *{8}{c}}
\toprule
\textbf{\(d\)} & \textbf{Scenario}
& \textbf{NPBoost}
& \textbf{ANPBoost}
& \textbf{NP}
& \textbf{ANP}
& \textbf{Task-ID GBT}
& \textbf{GPLinear}
& \textbf{TabICL\_T}
& \textbf{TabICL\_P} \\
\midrule

\Block{2-1}{1} & within-task
& \makecell[cc]{\textbf{0.576} / \textbf{0.325}\\{\scriptsize (0.001 / 0.001)}}
& \makecell[cc]{\textbf{0.576} / \textbf{0.325}\\{\scriptsize (0.001 / 0.001)}}
& \makecell[cc]{0.596 / 0.335\\{\scriptsize (0.001 / 0.001)}}
& \makecell[cc]{\textbf{0.576} / 0.326\\{\scriptsize (0.001 / 0.001)}}
& \makecell[cc]{0.594 / \textemdash{}\\{\scriptsize (0.002 / \textemdash{})}}
& \makecell[cc]{0.685 / 0.379\\{\scriptsize (0.002 / 0.001)}}
& \makecell[cc]{0.703 / 0.391\\{\scriptsize (0.003 / 0.001)}}
& \makecell[cc]{1.104 / 0.621\\{\scriptsize (0.004 / 0.003)}} \\

& few-shot
& \makecell[cc]{0.688 / 0.387\\{\scriptsize (0.010 / 0.006)}}
& \makecell[cc]{0.659 / 0.371\\{\scriptsize (0.008 / 0.004)}}
& \makecell[cc]{0.700 / 0.393\\{\scriptsize (0.008 / 0.004)}}
& \makecell[cc]{\textbf{0.658} / \textbf{0.370}\\{\scriptsize (0.009 / 0.005)}}
& \makecell[cc]{1.118 / \textemdash{}\\{\scriptsize (0.019 / \textemdash{})}}
& \makecell[cc]{0.836 / 0.455\\{\scriptsize (0.005 / 0.003)}}
& \makecell[cc]{0.912 / 0.508\\{\scriptsize (0.007 / 0.003)}}
& \makecell[cc]{1.137 / 0.641\\{\scriptsize (0.020 / 0.011)}} \\

\midrule

\Block{2-1}{2} & within-task
& \makecell[cc]{0.862 / 0.488\\{\scriptsize (0.003 / 0.002)}}
& \makecell[cc]{\textbf{0.860} / \textbf{0.484}\\{\scriptsize (0.002 / 0.001)}}
& \makecell[cc]{0.906 / 0.519\\{\scriptsize (0.002 / 0.004)}}
& \makecell[cc]{0.866 / 0.486\\{\scriptsize (0.002 / 0.001)}}
& \makecell[cc]{0.934 / \textemdash{}\\{\scriptsize (0.004 / \textemdash{})}}
& \makecell[cc]{0.953 / 0.527\\{\scriptsize (0.003 / 0.001)}}
& \makecell[cc]{1.091 / 0.610\\{\scriptsize (0.005 / 0.003)}}
& \makecell[cc]{1.125 / 0.634\\{\scriptsize (0.003 / 0.002)}} \\

& few-shot
& \makecell[cc]{1.015 / 0.572\\{\scriptsize (0.006 / 0.004)}}
& \makecell[cc]{0.973 / \textbf{0.546}\\{\scriptsize (0.006 / 0.004)}}
& \makecell[cc]{1.031 / 0.582\\{\scriptsize (0.008 / 0.005)}}
& \makecell[cc]{\textbf{0.972} / 0.547\\{\scriptsize (0.006 / 0.004)}}
& \makecell[cc]{1.118 / \textemdash{}\\{\scriptsize (0.009 / \textemdash{})}}
& \makecell[cc]{1.182 / 0.653\\{\scriptsize (0.002 / 0.001)}}
& \makecell[cc]{1.369 / 0.785\\{\scriptsize (0.003 / 0.002)}}
& \makecell[cc]{1.127 / 0.635\\{\scriptsize (0.008 / 0.004)}} \\

\bottomrule
\end{NiceTabular}%
}
\end{table*}

\paragraph{The mixed-effects decomposition consistently improves the standard NP.}
Across all synthetic settings, NPBoost matches or improves upon NP in
RMSE and CRPS under both within-task and few-shot prediction. 

\paragraph{Attention closes the gap to NPBoost.}
ANP consistently improves over the standard NP and often performs similarly
to, or slightly better than, NPBoost. Adding the boosted fixed effect to the
attentive model can still yield gains, but they are smaller. 

\paragraph{Generic tabular and linear mixed-effects baselines are less competitive.}

GPLinear and LME perform substantially worse (Appendix~\ref{app:complete_results}) than the flexible NP-based models. 
The TabICL models are also not competitive in this environment: neither variant matches the NP-based models.
Task-ID GBT performs reasonably for some within-task problems, but its performance drops sharply in the few-shot setting. 

\subsection{Results for Real-World Datasets}
\label{ss:results_real_world}

Table~\ref{tab:real_results_main} summarizes the main comparisons across
the real-world benchmarks. Complete results, including additional baselines,
are provided in Appendix~\ref{app:complete_results}.

\begin{table*}[ht!]
\centering
\caption{\textbf{Predictive performance across real-world datasets.}
Each entry reports mean RMSE / mean CRPS over \(5\) seeds, with standard
errors in parentheses. The lowest mean RMSE and CRPS in each row are shown
in bold, and \textemdash{} indicates an unavailable metric.}
\label{tab:real_results_main}
\small
\resizebox{\textwidth}{!}{%
\begin{NiceTabular}{ll *{7}{c}}
\toprule
\textbf{Dataset} & \textbf{Scenario}
& \textbf{NPBoost}
& \textbf{ANPBoost}
& \textbf{NP}
& \textbf{ANP}
& \textbf{Task-ID GBT}
& \textbf{GPLinear}
& \textbf{TabICL\_P} \\
\midrule

\Block{2-1}{\dataset{cars}} & within-task
& \makecell[cc]{\textbf{0.257} / 0.127\\{\scriptsize (0.002 / 0.001)}}
& \makecell[cc]{0.258 / 0.134\\{\scriptsize (0.003 / 0.006)}}
& \makecell[cc]{0.266 / \textbf{0.125}\\{\scriptsize (0.003 / 0.001)}}
& \makecell[cc]{0.270 / 0.128\\{\scriptsize (0.003 / 0.001)}}
& \makecell[cc]{0.271 / \textemdash{}\\{\scriptsize (0.003 / \textemdash{})}}
& \makecell[cc]{0.297 / 0.138\\{\scriptsize (0.003 / 0.001)}}
& \makecell[cc]{0.316 / 0.151\\{\scriptsize (0.002 / 0.001)}} \\

& few-shot
& \makecell[cc]{0.272 / 0.142\\{\scriptsize (0.010 / 0.008)}}
& \makecell[cc]{\textbf{0.265} / \textbf{0.136}\\{\scriptsize (0.008 / 0.002)}}
& \makecell[cc]{0.281 / \textbf{0.136}\\{\scriptsize (0.009 / 0.004)}}
& \makecell[cc]{0.288 / 0.143\\{\scriptsize (0.012 / 0.003)}}
& \makecell[cc]{0.401 / \textemdash{}\\{\scriptsize (0.018 / \textemdash{})}}
& \makecell[cc]{0.368 / 0.177\\{\scriptsize (0.023 / 0.008)}}
& \makecell[cc]{0.387 / 0.206\\{\scriptsize (0.013 / 0.008)}} \\

\midrule

\Block{2-1}{\dataset{Spotify}} & within-task
& \makecell[cc]{0.0907 / 0.1393\\{\scriptsize (0.0007 / 0.0235)}}
& \makecell[cc]{0.0912 / 0.1662\\{\scriptsize (0.0006 / 0.0120)}}
& \makecell[cc]{0.0922 / 0.1242\\{\scriptsize (0.0007 / 0.0172)}}
& \makecell[cc]{0.0928 / 0.1295\\{\scriptsize (0.0007 / 0.0242)}}
& \makecell[cc]{0.0928 / \textemdash{}\\{\scriptsize (0.0008 / \textemdash{})}}
& \makecell[cc]{0.1012 / \textemdash{}\\{\scriptsize (0.0005 / \textemdash{})}}
& \makecell[cc]{\textbf{0.0874} / \textbf{0.0447}\\{\scriptsize (0.0006 / 0.0003)}} \\

& few-shot
& \makecell[cc]{0.0979 / 0.2032\\{\scriptsize (0.0043 / 0.0034)}}
& \makecell[cc]{0.0998 / 0.2093\\{\scriptsize (0.0047 / 0.0026)}}
& \makecell[cc]{0.0998 / 0.1630\\{\scriptsize (0.0043 / 0.0085)}}
& \makecell[cc]{0.1012 / 0.1738\\{\scriptsize (0.0044 / 0.0189)}}
& \makecell[cc]{0.1240 / \textemdash{}\\{\scriptsize (0.0042 / \textemdash{})}}
& \makecell[cc]{0.1102 / 0.0590\\{\scriptsize (0.0040 / 0.0022)}}
& \makecell[cc]{\textbf{0.0909} / \textbf{0.0480}\\{\scriptsize (0.0032 / 0.0018)}} \\

\midrule

\Block{2-1}{\dataset{cows}} & within-task
& \makecell[cc]{\textbf{0.232} / 0.161\\{\scriptsize (0.007 / 0.018)}}
& \makecell[cc]{0.234 / 0.176\\{\scriptsize (0.010 / 0.015)}}
& \makecell[cc]{0.245 / 0.137\\{\scriptsize (0.006 / 0.004)}}
& \makecell[cc]{0.245 / 0.165\\{\scriptsize (0.006 / 0.019)}}
& \makecell[cc]{0.266 / \textemdash{}\\{\scriptsize (0.011 / \textemdash{})}}
& \makecell[cc]{0.239 / \textbf{0.130}\\{\scriptsize (0.005 / 0.002)}}
& \makecell[cc]{0.239 / \textbf{0.130}\\{\scriptsize (0.006 / 0.003)}} \\

& few-shot
& \makecell[cc]{0.258 / 0.167\\{\scriptsize (0.014 / 0.012)}}
& \makecell[cc]{0.249 / 0.173\\{\scriptsize (0.011 / 0.010)}}
& \makecell[cc]{0.270 / 0.174\\{\scriptsize (0.014 / 0.013)}}
& \makecell[cc]{0.254 / 0.191\\{\scriptsize (0.011 / 0.010)}}
& \makecell[cc]{0.317 / \textemdash{}\\{\scriptsize (0.011 / \textemdash{})}}
& \makecell[cc]{0.253 / 0.140\\{\scriptsize (0.014 / 0.008)}}
& \makecell[cc]{\textbf{0.236} / \textbf{0.131}\\{\scriptsize (0.012 / 0.006)}} \\

\midrule

\Block{2-1}{\dataset{bikes}} & within-task
& \makecell[cc]{0.287 / 0.129\\{\scriptsize (0.009 / 0.008)}}
& \makecell[cc]{\textbf{0.282} / 0.124\\{\scriptsize (0.009 / 0.006)}}
& \makecell[cc]{0.294 / 0.121\\{\scriptsize (0.008 / 0.003)}}
& \makecell[cc]{0.296 / 0.124\\{\scriptsize (0.010 / 0.006)}}
& \makecell[cc]{0.302 / \textemdash{}\\{\scriptsize (0.006 / \textemdash{})}}
& \makecell[cc]{0.429 / 0.200\\{\scriptsize (0.005 / 0.002)}}
& \makecell[cc]{0.309 / \textbf{0.108}\\{\scriptsize (0.006 / 0.001)}} \\

& few-shot
& \makecell[cc]{0.588 / 0.340\\{\scriptsize (0.070 / 0.032)}}
& \makecell[cc]{0.550 / 0.322\\{\scriptsize (0.058 / 0.022)}}
& \makecell[cc]{0.577 / 0.316\\{\scriptsize (0.063 / 0.036)}}
& \makecell[cc]{0.551 / 0.292\\{\scriptsize (0.054 / 0.036)}}
& \makecell[cc]{0.706 / \textemdash{}\\{\scriptsize (0.091 / \textemdash{})}}
& \makecell[cc]{0.646 / 0.325\\{\scriptsize (0.086 / 0.040)}}
& \makecell[cc]{\textbf{0.351} / \textbf{0.151}\\{\scriptsize (0.028 / 0.010)}} \\

\bottomrule
\end{NiceTabular}%
}
\end{table*}

\paragraph{NPBoost improves point prediction over the NP.}
NPBoost achieves lower RMSE than NP in seven of the eight
dataset--scenario combinations. NPBoost also outperforms both tree baselines (Appendix~\ref{app:complete_results}) and LME throughout, and GPLinear in seven of the eight comparisons.

\paragraph{Attention provides no consistent advantage.}
ANPBoost and ANP do not consistently improve upon their non-attentive counterparts on the real-world datasets.
One possible explanation is the small context size: to retain tasks with as few as $10$ observations, we use relatively small context sets.
This can reduce the benefit of cross-attention.

\paragraph{NPBoost and pooled TabICL show complementary strengths.}
In terms of RMSE, NPBoost or ANPBoost achieves the best performance in four
of the eight comparisons, while TabICL\_P performs best in the
remaining four. The NPBoost models are particularly competitive on the
largest dataset, \dataset{cars}, where they perform best in both prediction settings,
whereas TabICL\_P performs especially well on the smaller datasets and
in several few-shot settings. In contrast, TabICL\_T has
higher RMSE than NPBoost in all settings. This suggests that, when only a
small task-specific context set is available, TabICL can benefit substantially
from pooling observations across tasks.

\section{Conclusion}
\label{s:conclusion}

We introduced NPBoost, an extension of Neural Processes for tabular meta-learning.
It separates prediction into a shared tree-ensemble component and a task-specific residual component modeled by a latent NP.
The two parts are coupled through a common predictive Monte Carlo objective and trained by alternating NP and boosting updates.
Instead of modifying the NP architecture, we extend NPs by decomposing the model into shared and task-specific components.

Across synthetic experiments, NPBoost achieved lower RMSE and CRPS than the standard NP in most evaluated settings.
On the real-world datasets it reduced RMSE in seven of eight comparisons, although the improvement did not consistently extend to CRPS.
This may partly reflect our model-selection procedure, which uses validation RMSE rather than CRPS as the selection criterion. In the synthetic experiments, cross-attention generally improved both
model classes. ANP outperformed NP across all settings, while ANPBoost
generally matched or improved upon NPBoost.

\subsection*{AI use statement}

In this work, we used generative AI tools to provide feedback on research
methodology and experimental design, and to assist in the interpretation of
results. Additionally, we used generative AI tools for brainstorming, identifying
and summarizing relevant literature, and drafting and editing parts of the
manuscript. All AI-assisted suggestions, interpretations, literature
references, and manuscript edits were reviewed and verified by the authors.
The experiments and reported results were produced and checked by the authors.
We take responsibility for the final content of this work, including text,
claims, or artifacts produced with the aid of generative AI.


\bibliography{references}

\ifarxiv

\else
\section*{Checklist}

\begin{enumerate}

  \item For all models and algorithms presented, check if you include:
  \begin{enumerate}
    \item A clear description of the mathematical setting, assumptions, algorithm, and/or model. [Yes] The mathematical setting and models are described in Sections~\ref{s:background}--\ref{s:npboost}, and the complete NPBoost
    training algorithm is given in Appendix~\ref{app:npboost_training}.
    \item An analysis of the properties and complexity (time, space, sample size) of any algorithm. [Yes] The cross-fitting estimator is analyzed in
    Proposition~\ref{prop:crossfit_unbiased} and Appendix~\ref{app:proof_prop1},
    and test-time computational complexity is analyzed in Appendix~\ref{ss:prediction_complexity}.
    \item (Optional) Anonymized source code, with specification of all dependencies, including external libraries. [No] Implementation details and dependencies are provided in Appendix~I, but anonymized source code is not included with the submission.
    
  \end{enumerate}

  \item For any theoretical claim, check if you include:
  \begin{enumerate}
    \item Statements of the full set of assumptions of all theoretical results. [Yes] The assumptions for Proposition~\ref{prop:crossfit_unbiased} are stated in
    Section~\ref{ss:context_target_gradient}.
    \item Complete proofs of all theoretical results. [Yes] A complete proof of Proposition~\ref{prop:crossfit_unbiased} is provided in
    Appendix~\ref{app:proof_prop1}.
    \item Clear explanations of any assumptions. [Yes] The context--target construction and the assumptions underlying the
    result are explained in Section~\ref{ss:context_target_gradient}.

  \end{enumerate}

  \item For all figures and tables that present empirical results, check if you include:
  \begin{enumerate}
    \item The code, data, and instructions needed to reproduce the main experimental results (either in the supplemental material or as a URL). [No] The datasets, preprocessing, data splits, implementation, and hyperparameter settings are documented in Appendices~\ref{app:synthetic_data_generation}--\ref{app:parameter_tuning}, but source code is not included with the anonymous submission.
    
    \item All the training details (e.g., data splits, hyperparameters, how they were chosen). [Yes] The evaluation protocol is described in Section~\ref{ss:prediction_scenarios}, data-generation and splitting details in Appendices~\ref{app:synthetic_data_generation} and
    \ref{app:real_world_datasets}, implementation details in
    Appendix~\ref{app:implementation_details}, and hyperparameter tuning in Appendix~\ref{app:parameter_tuning}.
    
    \item A clear definition of the specific measure or statistics and error bars (e.g., with respect to the random seed after running experiments multiple times). [Yes] RMSE and CRPS are defined in Appendix~\ref{app:evaluation_criteria}. Results are reported as means and standard errors over five seeds in Tables~\ref{tab:synthetic_results_main} and~\ref{tab:real_results_main}, with complete results in Appendix~\ref{app:complete_results}.
    
    \item A description of the computing infrastructure used. (e.g., type of GPUs, internal cluster, or cloud provider). [Yes] The computing infrastructure is reported in Appendix~\ref{app:hardware}.
  \end{enumerate}

  \item If you are using existing assets (e.g., code, data, models) or curating/releasing new assets, check if you include:
  \begin{enumerate}
    \item Citations of the creator If your work uses existing assets. [Yes] The datasets and existing model and software implementations used in the experiments are cited in Appendices~H and~I.
    \item The license information of the assets, if applicable. [No] License information for the existing assets is not reported.
    \item New assets either in the supplemental material or as a URL, if applicable. [No] The source code will be made publicly available upon publication but is not
    included with the anonymous submission.

    \item Information about consent from data providers/curators. [Not Applicable] We use existing publicly available datasets and do not directly collect data
    from participants or data providers.
    \item Discussion of sensible content if applicable, e.g., personally identifiable information or offensive content. [Not Applicable]
  \end{enumerate}

  \item If you used crowdsourcing or conducted research with human subjects, check if you include:
  \begin{enumerate}
    \item The full text of instructions given to participants and screenshots. [Not Applicable]
    \item Descriptions of potential participant risks, with links to Institutional Review Board (IRB) approvals if applicable. [Not Applicable]
    \item The estimated hourly wage paid to participants and the total amount spent on participant compensation. [Not Applicable]
  \end{enumerate}
\end{enumerate}
\fi

\clearpage
\appendix
\onecolumn
\section{Proof of Proposition~\ref{prop:crossfit_unbiased}}
\label{app:proof_prop1}

\begin{proof}
For a fixed context set $\mathcal C_i$, define the latent-averaged
gradient $
    \bar g_{ij}(\mathcal C_i)
    =
    \mathbb E_{\mathbf Z_i\mid\mathcal C_i}
    \left[
        g_{ij}(\mathcal C_i,\mathbf Z_i)
    \right]$.
Thus, $\bar g_{ij}(\mathcal C_i)$ is the expected split-specific
gradient if the context set is held fixed and the $L$ latent variables
are repeatedly resampled from the corresponding encoder distribution. Let $\mathcal P_i$ denote the random partition into
$\mathcal C_i^{(1)},\ldots,\mathcal C_i^{(B_i)}$.
Conditional on $\mathcal P_i$, the context sets are fixed, and the only
remaining randomness in the split-specific gradients comes from the
latent draws. Therefore, by linearity of expectation,
\[
\begin{aligned}
    \mathbb E\!\left[g_{ij}\mid\mathcal P_i\right]
    &=
    \frac{1}{B_i-1}
    \sum_{b=1}^{B_i}
    \mathbf 1\!\left\{
        j\notin\mathcal C_i^{(b)}
    \right\}
    \bar g_{ij}\!\left(\mathcal C_i^{(b)}\right).
\end{aligned}
\]

We now take expectation over the random partition. Since the partition
is uniform, each fold $\mathcal C_i^{(b)}$ is marginally uniform over
all subsets of size $n_{c,i}$. Moreover,
\[
    \Pr\!\left(j\notin\mathcal C_i^{(b)}\right)
    =
    1-\frac{n_{c,i}}{N_i}
    =
    \frac{B_i-1}{B_i},
\]
and, conditional on $j\notin\mathcal C_i^{(b)}$,
$\mathcal C_i^{(b)}$ is uniformly distributed over all subsets of size
$n_{c,i}$ that do not contain $j$. Hence
\[
\begin{aligned}
    \mathbb E[g_{ij}]
    &=
    \frac{1}{B_i-1}
    \sum_{b=1}^{B_i}
    \mathbb E\!\left[
        \mathbf 1\!\left\{
            j\notin\mathcal C_i^{(b)}
        \right\}
        \bar g_{ij}\!\left(\mathcal C_i^{(b)}\right)
    \right]
    \\
    &=
    \frac{1}{B_i-1}
    \sum_{b=1}^{B_i}
    \frac{B_i-1}{B_i}
    \mathbb E\!\left[
        \bar g_{ij}(\mathcal C_i)
        \,\middle|\,
        |\mathcal C_i|=n_{c,i},\;
        j\notin\mathcal C_i
    \right]
    =
    \mathbb E\!\left[
        \bar g_{ij}(\mathcal C_i)
        \,\middle|\,
        |\mathcal C_i|=n_{c,i},\;
        j\notin\mathcal C_i
    \right].
\end{aligned}
\]

Finally, by the law of iterated expectation,
\[
\begin{aligned}
    \mathbb E\!\left[
        \bar g_{ij}(\mathcal C_i)
        \,\middle|\,
        |\mathcal C_i|=n_{c,i},\;
        j\notin\mathcal C_i
    \right]
    &=
    \mathbb E\!\left[
        g_{ij}(\mathcal C_i,\mathbf Z_i)
        \,\middle|\,
        |\mathcal C_i|=n_{c,i},\;
        j\notin\mathcal C_i
    \right],
\end{aligned}
\]
which proves the claim. The argument for the diagonal Hessian is identical.
\end{proof}

\section{Derivation of Boosting Gradient and Hessian}
\label{app:hessian}
For notational simplicity, we assume that all tasks
have the same number of observations $N_i=N$ and use the same context
size $n_c$, so that $B_i=B=N/n_c$ for all tasks. The derivation is unchanged when task sizes vary.
Consider boosting round \(t\), task \(i\), and context-target split \((\mathcal{C}_i^{(b)},\mathcal{T}_i^{(b)})\).
The risk in Equation~\eqref{eq:npboost_empirical_risk} is an expectation
over context--target splits and latent draws. During training we work
with one realized split:
\begin{equation}
    \widehat{\mathcal R}^{(b)}(F,\theta,\phi)
    = - \sum_{i=1}^{K} \widehat\ell_i^{(b)}(F,\theta,\phi),
    \label{eq:realized_risk}
\end{equation}
for the risk evaluated on split $b$ with the drawn latent samples.
For a target observation with index \(j\in\mathcal T_i^{(b)}\), we derive the gradient
and diagonal-Hessian contributions of \(\widehat {\mathcal R}^{(b)}\) with respect to \(F(x_{ij})\), evaluated at \(F^{(t-1)}\):
\begin{equation}
    g_{ij}^{(t,b)}
    =
    \left.
    \frac{\partial \widehat{\mathcal R}^{(b)} (F,\theta,\phi)}{\partial F(x_{ij})}
    \right|_{F = F^{(t-1)}}
    ,
    \qquad
    h_{ij}^{(t,b)}
    =
    \left.
    \frac{\partial^2 \widehat{\mathcal R}^{(b)}(F,\theta,\phi)}{\partial F(x_{ij})^2}
    \right|_{F = F^{(t-1)}}.
    \label{eq:boosting_grad_hess_def}
\end{equation}

During the tree update, the updated NP parameters, sampled latent variables,
context-dependent representations, and decoder outputs are held fixed.
Thus, the derivatives below do not propagate through the encoder or decoder.

For the split-specific versions of the quantities in
Equation~\eqref{eq:boosting_component_residual}, write
\begin{equation}
    e_{ij\ell}^{(t,b)}
    =
    y_{ij}
    -
    F^{(t-1)}(x_{ij})
    -
    \mu_{\theta,ij\ell}^{(t,b)},
    \qquad
    \left(\sigma_{ij\ell}^{(t,b)}\right)^2
    =
    v_{\theta,ij\ell}^{(t,b)}
    +
    \left(\sigma_e^{(t)}\right)^2.
    \label{eq:split_boosting_component_residual}
\end{equation}
Note that \(\sigma_e^{(t)}\) does not depend on split \(b\), since it is an estimated parameter of the NP. Under the factorized Gaussian decoder, the conditional log-likelihood
associated with latent sample \(\ell\) is
\begin{equation}
\begin{aligned}
    s_{i\ell}^{(t,b)}
    &=
    \sum_{m\in\mathcal T_i^{(b)}}
    \log
    \mathcal N\!\left(
        y_{im};
        F^{(t-1)}(x_{im})
        +
        \mu_{\theta,im\ell}^{(t,b)},
        \left(\sigma_{im\ell}^{(t,b)}\right)^2
    \right).
\end{aligned}
\label{eq:split_conditional_log_likelihood}
\end{equation}
This is the split-specific form of
Equation~\eqref{eq:conditional_log_likelihood}.
The corresponding normalized likelihood weight is
\begin{equation}
    w_{i\ell}^{(t,b)}
    =
    \frac{
        \exp\!\left\{s_{i\ell}^{(t,b)}\right\}
    }{
        \sum_{q=1}^{L}
        \exp\!\left\{s_{iq}^{(t,b)}\right\}
    }.
    \label{eq:split_latent_likelihood_weights}
\end{equation}

For readability, we suppress the fixed indices \(t\) and \(b\) in the remainder of the derivation.
We compute \(g_{ij}\) and \(h_{ij}\) by carefully applying the chain rule.

Since \(\widehat{\mathcal R}(F,\theta,\phi)\) is a sum over multiple tasks, differentiating with respect to \(F(x_{ij})\) only affects the contribution from task \(i\). 
Hence,
\begin{equation}
\begin{aligned}
    \frac{\partial \widehat{\mathcal R}(F,\theta,\phi)}{\partial F(x_{ij})} 
    &= \frac{\partial }{\partial F(x_{ij})} \left[ -\hat \ell_i(F, \theta, \phi)\right]\\
    &= - \frac{\partial}{\partial F(x_{ij})} \left[\log\left(
        \frac{1}{L}
        \sum_{\ell=1}^{L}
        \exp\!\left\{
            s_{i\ell}(F,\theta)
        \right\}
    \right) \right].
\end{aligned}
\end{equation}

Since $F(x_{ij})$ only enters the term with index \(m=j\) in Equation~\eqref{eq:split_conditional_log_likelihood}, we have
\begin{equation}
\frac{\partial s_{i\ell}}{\partial F(x_{ij})} = \frac{\partial }{\partial F(x_{ij})} \log
    \mathcal N\!\left(
        y_{ij};
        F(x_{ij})
        +
        \mu_{\theta,ij\ell}^{},
        \left(\sigma_{ij\ell}\right)^2
    \right) \ = \frac{e_{ij\ell}} {\left(\sigma_{ij\ell}\right)^2}.
\label{eq:derivative_sil}
\end{equation}

By applying the chain rule, we then get
\begin{equation}
\frac{\partial}{\partial F(x_{ij})} \exp\{s_{il}(F, \theta)\} =  \exp\{s_{il}(F,\theta)\} \cdot \frac{e_{ij\ell}} {\left(\sigma_{ij\ell}\right)^2}.
\label{eq:derivative_exp_sil}
\end{equation}

By applying the chain rule, Equation~\eqref{eq:derivative_sil} and Equation~\eqref{eq:derivative_exp_sil}, we obtain
\begin{equation}
\begin{aligned}
    - \frac{\partial}{\partial F(x_{ij})} \log\left[
        \frac{1}{L}
        \sum_{\ell=1}^{L}
        \exp\!\left\{
            s_{i\ell}(F,\theta)
        \right\}
    \right] 
    &= - \frac{
        \frac{1}{L}
        \sum_{\ell=1}^{L}
        \exp\!\left\{
            s_{i\ell}(F,\theta)
        \right\} \cdot\frac{e_{ij\ell}} {\left(\sigma_{ij\ell}\right)^2}}{
        \frac{1}{L}
        \sum_{\ell=1}^{L}
        \exp\!\left\{
            s_{i\ell}(F,\theta) 
        \right\}}\\
    &= - \sum_{\ell=1}^{L} w_{i\ell} \frac{e_{ij\ell}} {\left(\sigma_{ij\ell}\right)^2}.
\end{aligned}
\end{equation}

Adding the indices again, gives the result from Equation~\eqref{eq:boosting_gradient}
\[
g_{ij}^{(t,b)} = -
    \sum_{\ell=1}^{L}
    w_{i\ell}^{(t, b)}
    \frac{e_{ij\ell}^{(t, b)}}{\left(\sigma_{ij\ell}^{(t, b)}\right)^2}.
\]
\hfill 

To derive the Hessian, we suppress the fixed indices \(t\), and \(b\) again.

The Hessian is given by
\[
h_{ij}=\frac{\partial^2 \widehat{\mathcal R}(F,\theta,\phi)}{\partial F(x_{ij})^2} = \frac{\partial}{\partial F(x_{ij})} \left[ - \sum_{\ell=1}^{L} w_{i\ell} \frac{e_{ij\ell}} {\left(\sigma_{ij\ell}\right)^2} \right].
\]

By applying the product rule, and using
\begin{equation} 
\begin{aligned} 
\frac{\partial w_{i\ell}}{\partial F(x_{ij})} = w_{i\ell} \Bigg[ \frac{e_{ij\ell}} {\left(\sigma_{ij\ell}\right)^2} - \sum_{q=1}^{L} w_{iq} \frac{e_{ijq}} {\left(\sigma_{ijq}\right)^2} \Bigg], 
\end{aligned} 
\label{eq:derivative_wil}
 \end{equation}
and 
\begin{equation} 
\begin{aligned} 
\frac{\partial e_{ij\ell}}{\partial F(x_{ij})} = -1, 
\end{aligned} 
\label{eq:derivative_eijl}
 \end{equation}

we have
\begin{equation}
\begin{aligned}
\frac{\partial}{\partial F(x_{ij})} \left[- \sum_{\ell=1}^{L} w_{i\ell} \frac{e_{ij\ell}} {\left(\sigma_{ij\ell}\right)^2} \right]
    &= - \sum_{\ell=1}^{L} \left( w_{i\ell} \Bigg[ \frac{e_{ij\ell}} {\left(\sigma_{ij\ell}\right)^2} - \sum_{q=1}^{L} w_{iq} \frac{e_{ijq}} {\left(\sigma_{ijq}\right)^2} \Bigg] \frac{e_{ij\ell}} {\left(\sigma_{ij\ell}\right)^2} + w_{i\ell}\frac{-1}{\sigma_{ij\ell}^2} \right)\\
    &= \sum_{\ell=1}^{L} w_{i\ell} \frac{1} {\left(\sigma_{ij\ell}\right)^2} - \sum_{\ell=1}^{L} w_{i\ell} \left( \frac{e_{ij\ell}} {\left(\sigma_{ij\ell}\right)^2} \right)^2  \left[ \sum_{\ell=1}^{L} w_{i\ell} \frac{e_{ij\ell}} {\left(\sigma_{ij\ell}\right)^2} \right]^2.
\end{aligned}
\end{equation}

Adding the indices again, gives
\begin{equation} 
\begin{aligned} 
 h_{ij}^{(t,b)} ={}
 & \sum_{\ell=1}^{L}w_{i\ell}^{(t,b)} \frac{1} {\left(\sigma_{ij\ell}^{(t,b)}\right)^2} - \sum_{\ell=1}^{L} w_{i\ell}^{(t,b)} \left( \frac{e_{ij\ell}^{(t,b)}} {\left(\sigma_{ij\ell}^{(t,b)}\right)^2} \right)^2 + \left[ \sum_{\ell=1}^{L} w_{i\ell}^{(t,b)} \frac{e_{ij\ell}^{(t,b)}} {\left(\sigma_{ij\ell}^{(t,b)}\right)^2} \right]^2. 
\end{aligned} 
\label{eq:boosting_hessian_derivation} 
\end{equation} 
\hfill 

The entries of the diagonal Hessian are not guaranteed to be non-negative, and our implementation currently does not clip negative entries. In our experiments, negative values occurred very rarely and did not lead to numerical issues. However, negative Hessians could easily be clipped to a small positive value if required. The gradient and Hessian expressions apply to every \(j\in\mathcal T_i^{(b)}\). The split-specific contributions are assembled across context--target splits according to Section~\ref{ss:context_target_gradient}, producing the gradient and diagonal-Hessian vectors. 
\section{Algorithm}
\label{app:npboost_training}
\begin{algorithm}[h!]
\caption{NPBoost training}
\begin{algorithmic}[1]

\REQUIRE Training tasks $\{(X_i,y_i)\}_{i=1}^K$;
initial fixed-effects predictor $F^{(0)}$;
initial NP parameters $(\theta^{(0)}, \phi^{(0)})$;
NP epochs per round $E$;
number of boosting rounds $T$;
boosting learning rate $\nu$

\ENSURE Trained model
$(F^{(T)},\theta^{(T)}, \phi^{(T)})$

\FOR{$t=1,\dots,T$}

    \STATE Compute the residualized responses
    \[
        \widetilde y_i^{(t-1)}
        =
        y_i-F^{(t-1)}(X_i),
        \qquad i=1,\dots,K.
    \]

    \STATE Update
    $(\theta^{(t-1)}, \phi^{(t-1)})$ to
    $(\theta^{(t)}, \phi^{(t)})$
    by $E$ epochs of stochastic minimization of the Monte Carlo predictive risk in
    Equation~\eqref{eq:npboost_empirical_risk}, using 
    $\{(X_i, \widetilde y_i^{(t-1)})\}_{i=1}^K$
    and randomly sampled context--target splits.

    \STATE Construct the context--target splits used for the tree update
    according to Section~\ref{ss:context_target_gradient}.

    \STATE Evaluate the split-specific first- and second-order derivatives
    using
    $(F^{(t-1)},\theta^{(t)}, \phi^{(t)})$,
    and aggregate them to obtain the complete gradient vector
    $g^{(t)}$ and diagonal-Hessian vector $h^{(t)}$.

    \STATE Fit the regression tree
    \[
        f^{(t)}
        \in
        \argmin_{f\in\mathcal H}
        \sum_{i=1}^{K}
        \sum_{j=1}^{N_i}
        h_{ij}^{(t)}
        \left(
            -\frac{g_{ij}^{(t)}}{h_{ij}^{(t)}}
            -
            f(x_{ij})
        \right)^2.
    \]

    \STATE Update the fixed-effects predictor:
    \[
        F^{(t)}(\cdot)
        =
        F^{(t-1)}(\cdot)
        +
        \nu f^{(t)}(\cdot).
    \]

\ENDFOR

\RETURN $(F^{(T)},\theta^{(T)}, \phi^{(T)})$

\end{algorithmic}
\end{algorithm}

\section{Test-Time Computational Complexity}
\label{ss:prediction_complexity}

Let $n_c=|\mathcal C_i|$ and $n_t=|\mathcal T_i|$, and let $L$
denote the number of latent samples.
A standard latent NP has complexity
$
    \mathcal O\!\left(n_c + L n_t\right).
$
For fixed $L$, this is linear in the combined number of context and target
observations. Let $M$ denote the number of trees in the fixed-effects ensemble and $D$
their average depth.
NPBoost additionally evaluates the tree ensemble at the context and target
inputs, giving $
    \mathcal O\!\left(
        n_c + L n_t + MD(n_c+n_t)
    \right)
$ complexity.
Consequently, for fixed $L$, $M$, and $D$, NPBoost retains linear prediction
complexity in $n_c+n_t$.

For Attentive-NPs and ANPBoost, the deterministic cross-attention path compares every target query with every context key and therefore adds a term of order $\mathcal O(n_c n_t)$. This is quadratic when $n_c$ and $n_t$ grow proportionally, but remains linear in $n_t$ in the few-shot setting when $n_c$ is fixed. Our implementation does not use context self-attention, which would add an additional $\mathcal O(n_c^2)$ term.

\
\section{Robustness Experiments}
\label{app:robust}
We consider four variants of the synthetic experiment of the main text, each designed to examine
a different aspect of the problem.
The \dataset{more-tasks} setting redistributes the same $60{,}000$ observations
across $1{,}200$ tasks with $50$ observations each, allowing us to study the
effect of observing more task realizations and fewer
observations per task.
In this setting, the validation and test tasks only provide $10$ labeled context points.
The \dataset{Brownian} setting replaces the RBF process with Brownian paths,
while the \dataset{Brownian-LN} setting additionally replaces the
Gaussian noise with centered lognormal noise of mean zero and variance
$0.25$. Note that Brownian paths do not have constant marginal variance across the input domain, since the marginal variance grows with the value of each input feature. Thus the random-effects component is no longer normalized to have variance one.
Finally, the \dataset{irr-feat} setting adds two irrelevant features to the original two-dimensional input. 
The original features influence both the shared and task-specific components, whereas the two additional features have no effect.
Complete specifications and representative samples for fixed-effects, random-effects and noise components are provided in
Appendix~\ref{app:synthetic_data_generation}.

\section{Synthetic Datasets}
\label{app:synthetic_data_generation}

Here we provide details for the synthetic data-generating process used throughout Section~\ref{s:experiments}.
We describe each component (fixed effects, task-specific random effects, and observation noise) in the following subsections.

For task \(i\) and observation \(j\), responses are generated according to
\begin{equation}
    y_{ij}
    =
    F(x_{ij})
    +
    b_i(x_{ij})
    +
    \varepsilon_{ij},
    \label{eq:app_synthetic_model}
\end{equation}
where \(F\) is shared across tasks, \(b_i\) is a task-specific stochastic function, and \(\varepsilon_{ij}\) denotes observation noise.
Unless stated otherwise, inputs are sampled independently \(x_{ij} \sim \operatorname{Unif}([-2,2]^d)\), and each dataset contains \(600\) tasks with \(100\) observations per task.

\subsection{Fixed-Effects Function}

We consider a piecewise constant fixed-effects function \(F\) in one or two input dimensions, referred to as Steps~1D and Steps~2D, respectively.
The Steps~1D function is built from the unnormalized piecewise-constant function 
\begin{equation}
    \bar F(x) = 
        3 \cdot \mathbf{1}_{\{-2 \leq x < -1\}}
        -2 \cdot \mathbf{1}_{\{-1 \leq x < 0\}}
        +5 \cdot \mathbf{1}_{\{0 \leq x < 0.5\}}
        - 1 \cdot \mathbf{1}_{\{0.5 \leq x \leq 2\}}.
    \label{eq:app_steps_1d}
\end{equation}
We rescale \(\bar F\) to unit variance by setting \(F = c \cdot \bar F\), where \[c = \sqrt{\frac{1}{ \operatorname{Var}(\bar F(x))}}.\]
The two-dimensional function is constructed analogously as a piecewise-constant surface with variance \(1\).
We omit its full definition because it requires a cumbersome enumeration of rectangular regions.
Figure~\ref{fig:app_steps} visualizes both functions.

\begin{figure}[!htbp]
    \centering
    \begin{subfigure}[b]{0.48\linewidth}
        \centering
        \includegraphics[width=\linewidth]{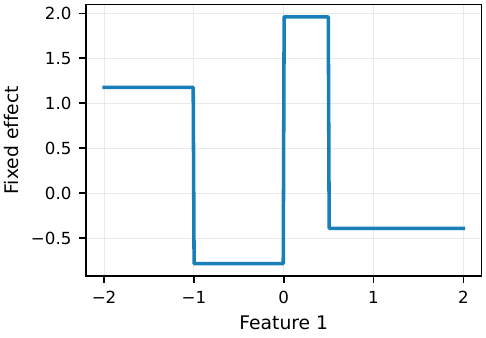}
        \caption{Steps \(1\)D.}
        \label{fig:app_steps_1d}
    \end{subfigure}
    \hfill
    \begin{subfigure}[b]{0.48\linewidth}
        \centering
        \includegraphics[width=\linewidth]{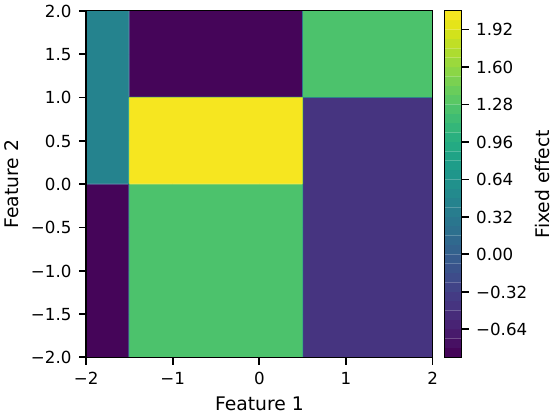}
        \caption{Steps \(2\)D.}
        \label{fig:app_steps_2d}
    \end{subfigure}
    \caption{\textbf{Fixed-effects functions.} Piecewise-constant functions used to generate the shared structure across tasks.}
    \label{fig:app_steps}
\end{figure}

\paragraph{Normalization constant for Steps~1D.}

From \(\mathbb{E}[\bar F(x)] = \frac{1}{2}\) and \(\mathbb{E}[\bar F(x)^2] = \frac{27}{4}\), it follows that \(\operatorname{Var}(\bar F(x)) =  \frac{27}{4} - \left(\frac{1}{2} \right)^2 = \frac{13}{2}\)
and therefore \(c=\sqrt{\frac{2}{13}}.\)

\subsection{Task-Specific Processes}

In the reference setting, the task-specific functions are sampled independently from a zero-mean Gaussian process with RBF kernel,
\begin{equation}
    b_i \sim \mathcal{GP}(0,k),
    \qquad
    k(x,x')
    =
    \exp\left\{
        -\frac{\lVert x-x'\rVert_2^2}{2\cdot 0.5^2}
    \right\}.
    \label{eq:app_rbf_kernel}
\end{equation}

In the Brownian settings, the RBF process is replaced by a (shifted) Brownian-sheet kernel \citep{khoshnevisan2002multiparameter}. Writing
\(x=(x_1,\ldots,x_d)\) and \(x'=(x'_1,\ldots,x'_d)\), we use
\begin{equation}
    k(x,x')
    =
    \sigma_b^2
    \prod_{\ell=1}^{d}
    \min\left(x_{\ell}-a,\,x'_{\ell}-a\right),
    \qquad
    \sigma_b^2=1,
    \quad
    a=-2.
    \label{eq:app_brownian_kernel}
\end{equation}
For \(d=1\), this is the covariance kernel of Brownian motion starting at \(a=-2\).
Unlike the RBF process, this process is non-stationary and produces rough sample paths.
Figure~\ref{fig:app_random} shows representative one-dimensional realizations.

\begin{figure}[!htbp]
    \centering
    \begin{subfigure}[b]{\linewidth}
        \centering
        \includegraphics[width=\linewidth]{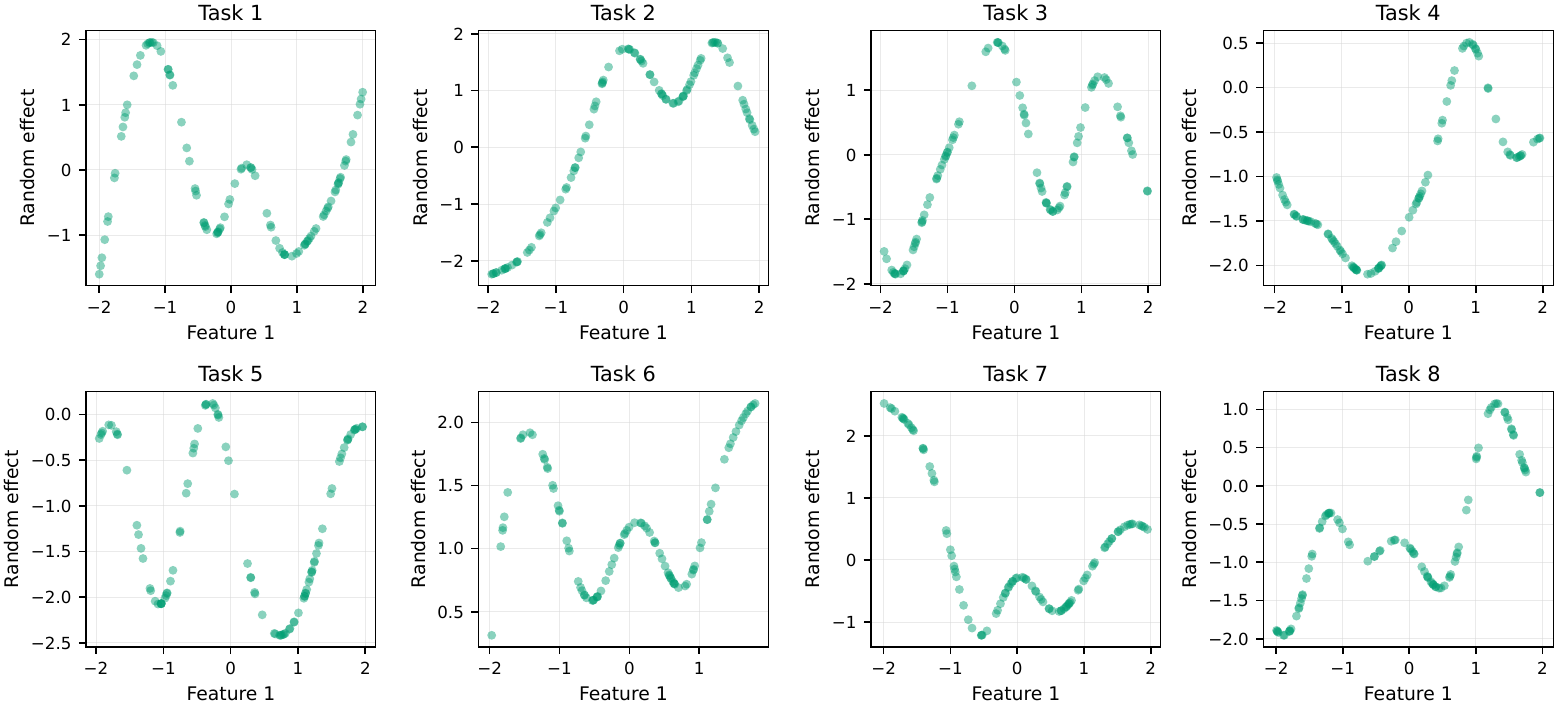}
        \caption{RBF Gaussian process.}
        \label{fig:app_random_gp}
    \end{subfigure}
    \\[1.5ex]
    \begin{subfigure}[b]{\linewidth}
        \centering
        \includegraphics[width=\linewidth]{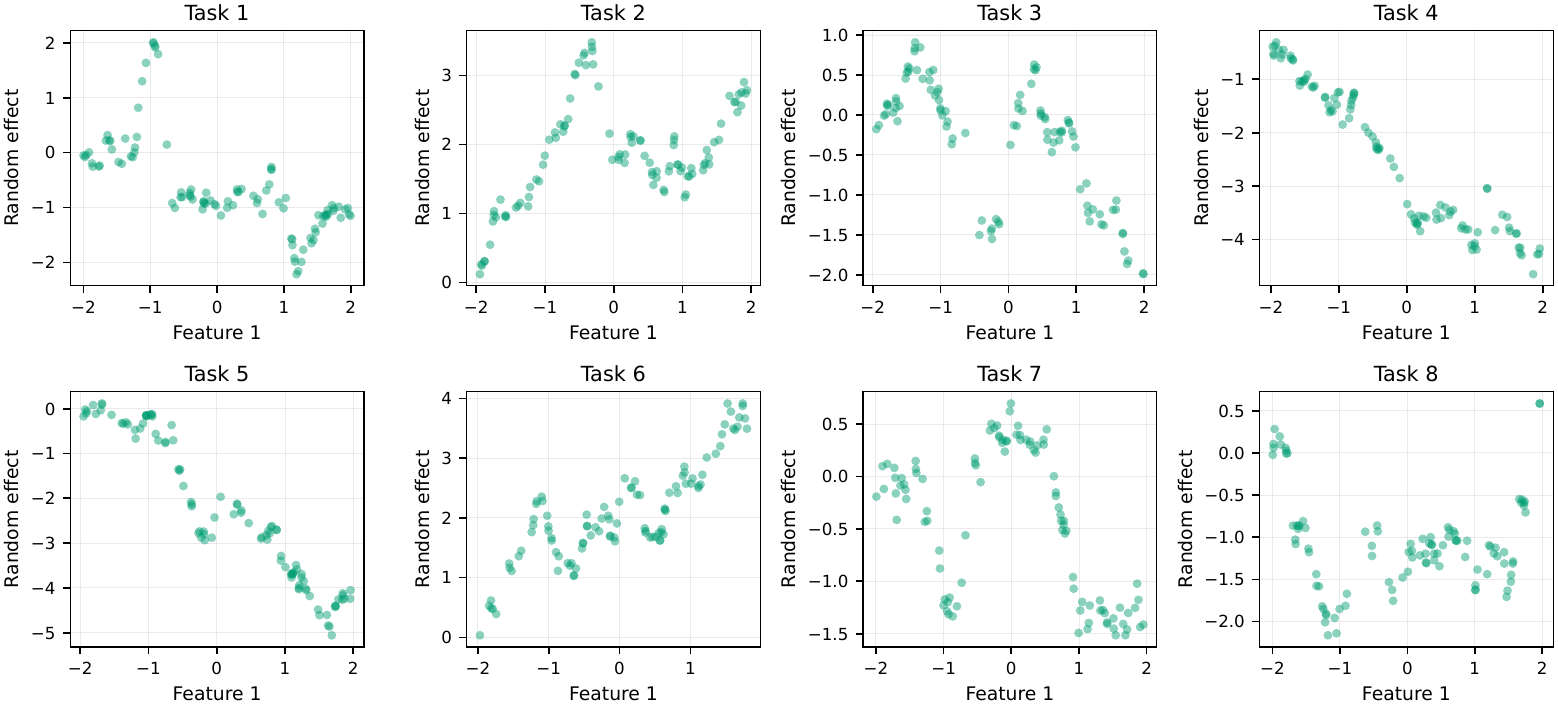}
        \caption{Brownian motion.}
        \label{fig:app_random_brownian}
    \end{subfigure}
    \caption{\textbf{Task-specific processes.} Independent realizations of eight task-specific processes.}
    \label{fig:app_random}
\end{figure}

\subsection{Observation Noise}

The default observation noise is sampled independently as
\begin{equation}
    \varepsilon_{ij}\sim\mathcal{N}(0,0.25).
    \label{eq:app_gaussian_noise}
\end{equation}

The \dataset{Brownian-LN} setting instead uses centered lognormal noise.
Let
\begin{equation}
    Z \sim \operatorname{LogNormal}(\mu,\sigma^2),
    \qquad
    \sigma=1,
\end{equation}
where \(\mu\) and \(\sigma\) denote the mean and standard deviation on the log scale.
To match the variance in Equation~\eqref{eq:app_gaussian_noise}, we set
\begin{equation}
    \mu
    =
    \frac{1}{2}
    \log\left(
        \frac{0.25}{\exp\{\sigma^2\}-1}
    \right)
    -
    \frac{\sigma^2}{2},
\end{equation}
and define
\begin{equation}
    \varepsilon_{ij}
    =
    Z-\exp\left\{\mu+\frac{\sigma^2}{2}\right\}.
\end{equation}
Consequently, \(\mathbb{E}[\varepsilon]=0\) and \(\operatorname{Var}(\varepsilon)=0.25\), while the noise distribution remains asymmetric.

\subsection{GP Regression}
\label{app:gp_regression}
In addition to the configurations described in Section~\ref{ss:synthetic_datasets}, we also examine the \dataset{GP-Regression} setting, in which we use a 1D or 2D GP with an RBF kernel and Gaussian noise, but with \(F(\cdot ) \equiv 0\).
We include this experiment because it aligns with the setup commonly used in many NP benchmarks (e.g. \citet{kim2019iclr-attentive, mortimer2026incremental}).
It also acts as a sanity check to ensure that our proposed method does not unexpectedly perform poorly when the fixed effects are zero.
We present its results together with all synthetic settings from the main text in Appendix~\ref{app:complete_results}.

\subsection{Data Splits and Context-Set Construction}
\label{app:synthetic_splits}

For each random seed, we generate a new dataset.
For within-task prediction, all tasks are used during training.
Within each task, observations are randomly divided into \(50\%\) training, \(25\%\) validation, and \(25\%\) test observations.
The training observations provide the task-specific context, and performance is evaluated on the corresponding held-out observations.

For few-shot prediction, entire tasks are partitioned into training, validation, and test sets.
In settings with \(600\) tasks, we use \(400\) training tasks, \(100\) validation tasks, and \(100\) test tasks.
Each validation or test task provides \(20\) randomly selected labeled context points, and predictions are evaluated on its remaining \(80\) observations.
In the \emph{more-tasks} setting, we use \(800\) training tasks, \(200\) validation tasks, and \(200\) test tasks, each with only \(50\) observations.
In the few-shot setting, validation and test tasks provide \(10\) labeled context points, and predictions are evaluated on the remaining \(40\) observations.

Figures~\ref{fig:reference_within_task} and \ref{fig:reference_few_shot} visualize samples from tasks in the \dataset{reference} setting. 
The raw observations are identical for the within-task and the few-shot setting. 
Only the train-validation-test split changes. 
In within-task, each task is split internally into train, validation, and test points.
In few-shot, entire tasks are used for training, whereas some tasks are held out for validation and some for testing.
The validation and test tasks are further split into a small context set and a target set on which predictions are evaluated.

\begin{figure}[!htbp]
    \centering
    \includegraphics[width=\linewidth]{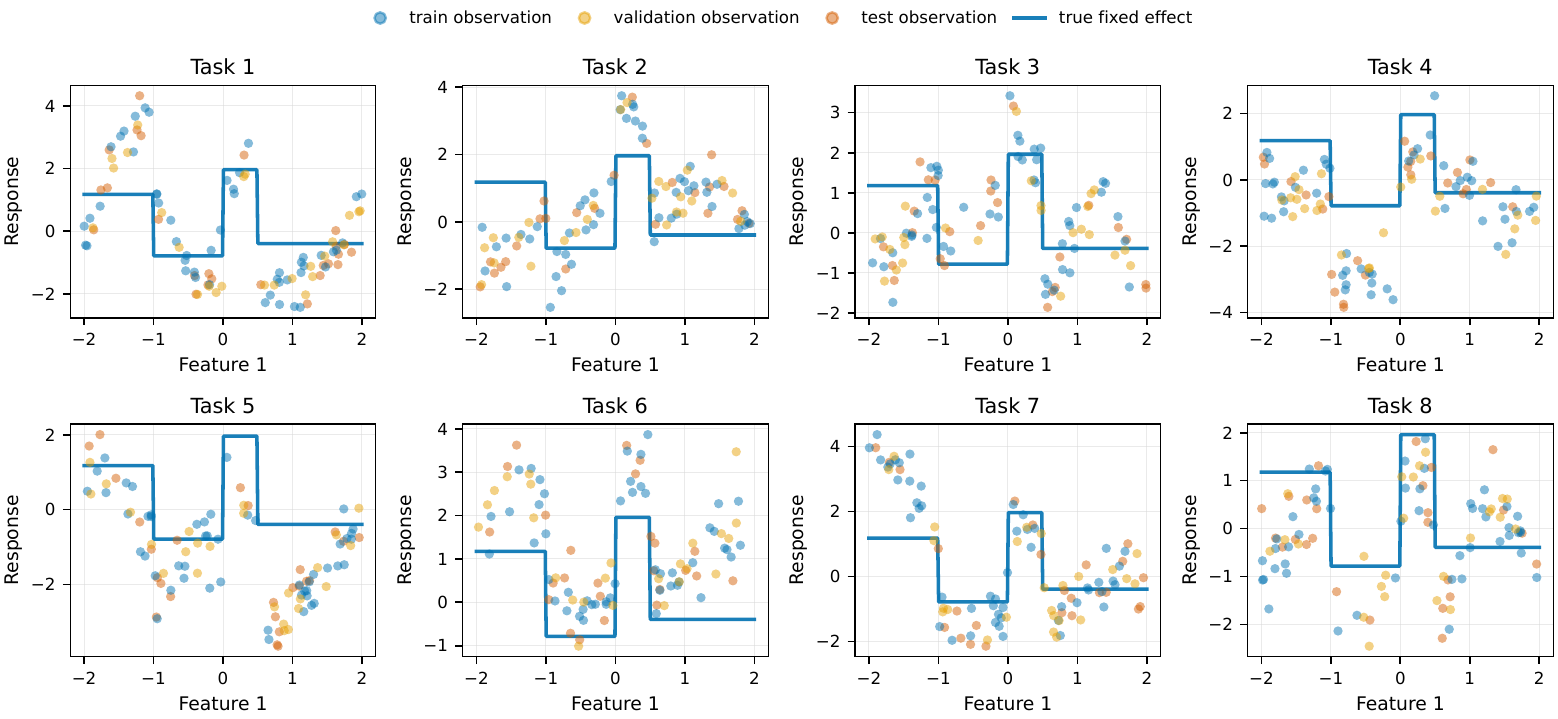}
    \caption{Representative tasks from the \dataset{reference} experiment with Steps~1D fixed effects for the within-task split.}
    \label{fig:reference_within_task}
\end{figure}

\begin{figure}[!htbp]
    \centering
    \includegraphics[width=\linewidth]{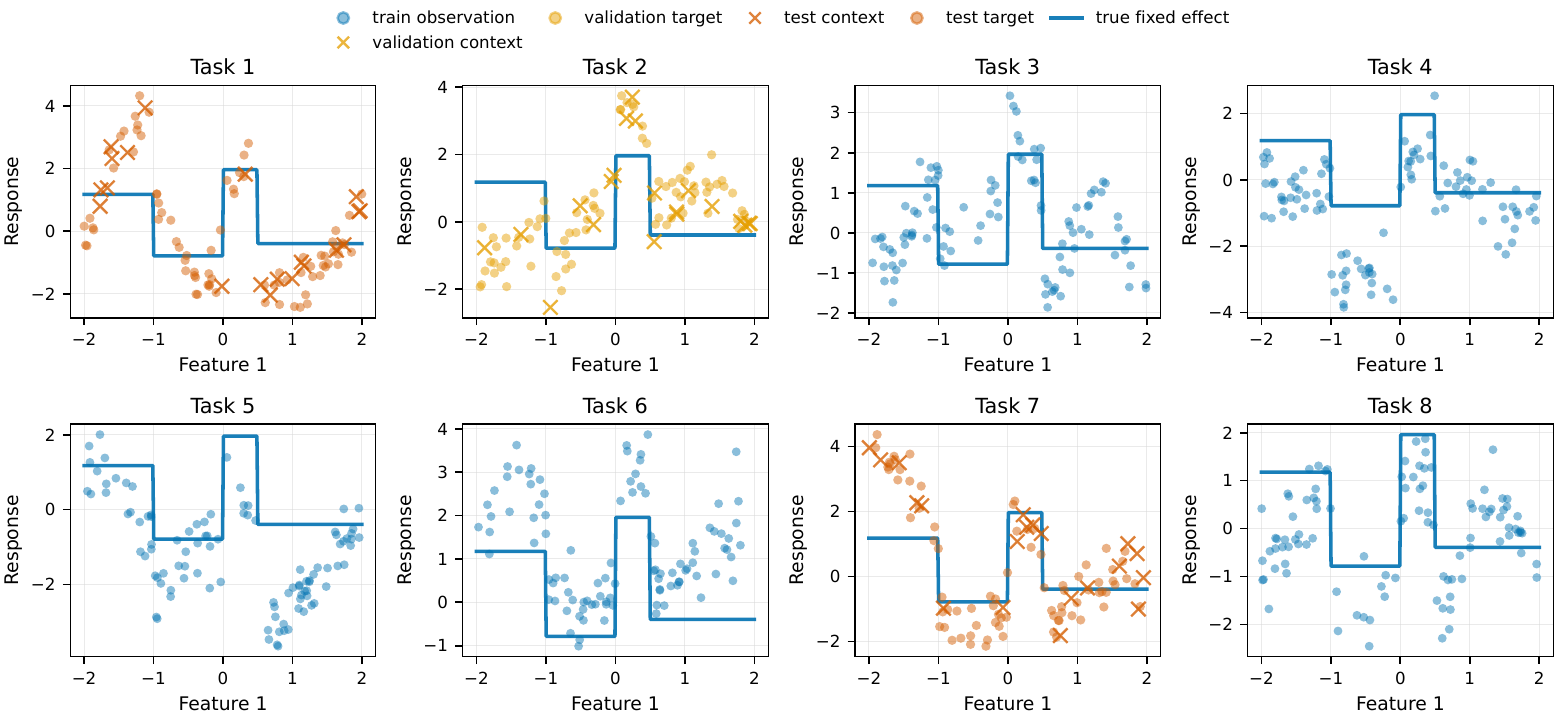}
    \caption{Representative tasks from the \dataset{reference} experiment with Steps~1D fixed effects for the few-shot split.}
    \label{fig:reference_few_shot}
\end{figure}

\section{Evaluation Criteria}
\label{app:evaluation_criteria}

For all prediction scenarios, we use the root mean square error (RMSE) to
measure point prediction accuracy of the mean of the predictive
distribution. For the test tasks $i\in\mathcal I_{\mathrm{test}}$, with
$N_{\mathrm{test}}
=
\sum_{i\in\mathcal I_{\mathrm{test}}}|\mathcal T_i|$
test observations in total, the RMSE is given by
\begin{equation}
    \operatorname{RMSE}
    =
    \sqrt{
        \frac{1}{N_{\mathrm{test}}}
        \sum_{i\in\mathcal I_{\mathrm{test}}}
        \left\|
            y_i^{\mathcal T_i}
            -
            \widehat\mu_i^{\mathcal T_i}
        \right\|_2^2
    },
    \label{eq:rmse}
\end{equation}
where $\widehat\mu_i^{\mathcal T_i}$ is the mean of the predictive
distribution at the target inputs $X_i^{\mathcal T_i}$.

Furthermore, we use the continuous ranked probability score (CRPS) for
models that provide probabilistic predictions. The CRPS is a strictly proper
scoring rule \citep{gneiting2007strictly} and is given by
\begin{equation}
    \operatorname{CRPS}
    =
    \frac{1}{N_{\mathrm{test}}}
    \sum_{i\in\mathcal I_{\mathrm{test}}}
    \sum_{j\in\mathcal T_i}
    \int
    (
        \widehat{\operatorname{CDF}}_{ij}(u)
        -
        \mathbbm{1}\{y_{ij}\leq u\}
    )^2
    \,\mathrm{d}u,
    \label{eq:crps}
\end{equation}
where $\widehat{\operatorname{CDF}}_{ij}(\cdot)$ is the predictive
cumulative distribution function for observation $j$ in task $i$.
As we do not have a closed-form expression for the predictive cumulative
distribution function of the NP-based models, we approximate
the CRPS with a Monte Carlo estimate using the computationally
efficient implementation in \citet{bingham2019pyro}.

\section{Real-World Datasets and Preprocessing}
\label{app:real_world_datasets}

\begin{table}[h!]
\caption{Real-world datasets after preprocessing.
$K$ and $N$ denote the number of tasks and observations, respectively.}
\label{tab:real_world_datasets}
\centering
\setlength{\tabcolsep}{1.75pt}
\begin{tabular}{@{}l p{0.28\columnwidth} p{0.28\columnwidth} cc@{}}
\toprule
Dataset & Response & Task & $K$ & $N$ \\
\midrule
\texttt{cars}
    & \(\log(\text{price})\)
    & car model
    & 1,255
    & 72,518 \\
\texttt{Spotify}
    & danceability
    & artist
    & 418
    & 7,868 \\
\texttt{cows}
    & milk protein
    & cow
    & 79
    & 1,337 \\
\texttt{bikes}
    & $\log(1+\text{count})$
    & calendar date
    & 353
    & 8465 \\
\bottomrule
\end{tabular}
\end{table}

\subsection{Preprocessing and Splits}
\label{app:real_preprocessing_and_splits}

We remove tasks with fewer than 10 observations.
For each seed, we construct the train--validation--test partitions and, in the few-shot scenario, the context--target partitions for the validation and test tasks before applying any preprocessing transformations.
The same partitions are used for all methods.

Continuous predictors are standardized using the mean and standard deviation estimated from the corresponding training data.
Categorical predictors are one-hot encoded using categories observed in the training data.

For within-task prediction, the observations of each task are randomly divided into 50\% training, 25\% validation, and 25\% test observations.
For few-shot prediction, tasks are partitioned into disjoint training, validation, and test sets.
Each validation or test task provides \(7\) labeled context points, and predictions are evaluated on all remaining observations.

For models with fixed and task-specific components, all predictors enter the fixed component, whereas only continuous predictors are provided for the task-specific component.

\subsection{Cars}
\label{sss:cars}
The dataset \dataset{cars} is available on Kaggle \citep{reese2021cars} and lists prices of used cars in the United States.
We follow the preprocessing procedure of \citet{simchoni2023integrating}, with minor modifications.
We predict log-transformed prices, \(\log(\text{price})\).
Each task corresponds to a car model. 
We use the continuous features latitude, longitude, odometer, and year, imputing NAs with the training-set median. 
We use manufacturer, drive, transmission, and condition as categorical predictors, assigning missing values to an NA-category.
We remove car models with fewer than \(10\) observations, which leaves \(1{,}255\) tasks and \(72{,}518\) observations.
For few-shot, we hold out \(200\) tasks for validation and \(200\) tasks for testing.
RMSE and CRPS are reported on the log scale.

\subsection{Spotify}
\label{app:spotify}
The dataset \dataset{Spotify} was published through TidyTuesday \citep{dslc2024tidytuesday}. 
It contains audio characteristics of tracks collected from Spotify.
We predict danceability and define tasks by artist.
The continuous predictors are energy, loudness, speechiness, acousticness, instrumentalness, liveness, valence, tempo, and duration.
Playlist genre, musical key, and mode are treated as categorical predictors.
We remove artists with fewer than \(10\) observations, which leaves \(418\) tasks and \(7{,}868\) observations.
For few-shot prediction, we hold out \(100\) tasks for validation and \(100\) tasks for testing.

\subsection{Cows}
\label{app:cows}
The dataset \dataset{cows} is the Milk dataset from the R package \texttt{nlme} \
\citep{pinheiro2026nlme}, and originally appeared in \citet{diggle2002analysis}.
It contains repeated measurements of milk protein content following calving.
We predict milk protein content and define tasks by cow.
The features are the time since calving (continuous) and diet (categorical).
With \(1{,}337\) observations from 79 cows, it is a small but standard
mixed-effects benchmark.
For few-shot prediction, we hold out \(15\) tasks for validation and \(15\) tasks for testing.

\subsection{Bikes}
\label{app:bikes}
The \dataset{bikes} dataset is the Seoul Bike Sharing Demand dataset from the UCI Machine Learning Repository \citep{uci2020seoulbike}.
We predict log-transformed hourly bicycle-rental counts, \(\log(1+\text{count})\), and define tasks by calendar date.
To represent temporal periodicity, we use cyclic encoding for hour of day, day of week, and day of year \citep{adams1998encoding}.
This encoding preserves the distance across cycle boundaries, such as Sunday and Monday or the end and beginning of a calendar year.
We remove observations marked as non-functioning hours.
The processed dataset contains \(352\) tasks with \(24\) observations each and one task with \(17\) observations.
The splits for within-task prediction are done as usual, whereas for few-shot prediction, we use temporally ordered task partitions shown in Table~\ref{tab:temporal_splits}.
RMSE and CRPS are reported on the log scale.

\begin{table}[!htbp]
    \centering
    \caption{Temporal train-validation-test splits for few-shot prediction for the \dataset{bikes} dataset. Validation and test tasks are randomly sampled from the indicated period following the training window.}
    \label{tab:temporal_splits}
    \begin{tabular}{ccc}
        \toprule
        Seed & Training tasks & Validation and test tasks randomly sampled from \\
        \midrule
        \(0\) & First \(2\) months & Following \(2\) months \\
        \(1\) & First \(3\) months & Following \(2\) months \\
        \(2\) & First \(4\) months & Following \(4\) months \\
        \(3\) & First \(6\) months & Following \(4\) months \\
        \(4\) & First \(8\) months & Following \(4\) months \\
        \bottomrule
    \end{tabular}
\end{table}

\section{Implementation Details}
\label{app:implementation_details}

All experiments were implemented in Python 3.12.13. All NP components were implemented in PyTorch 2.8.0 following the implementation of \citet{dubois2020npf}. The boosted-tree components of NPBoost and ANPBoost, as well as the GBT and Task-ID GBT baselines, used LightGBM 4.6.0. 
GPLinear and LME were fitted using GPBoost 1.6.1 \citep{sigrist2022gpboost}, and the TabICL baselines were run using TabICL 2.1.1.

We perform parameter tuning and early stopping based on validation RMSE.
For details on the tuning, we refer to Appendix~\ref{app:parameter_tuning}. 
For the boosted trees, (A)NP and (A)NPBoost models, we tune their parameters using random search \citep{bergstra2012random} with $32$ trials.
For (A)NPBoost, we calculate the validation RMSE after every boosting round.
We train for at most $500$ rounds and apply early stopping with a patience of $25$ rounds based on validation RMSE.
We use the same early-stopping patience for the gradient boosted trees.
For the NP models, we train for at most $4000$ epochs and apply early stopping with a patience of $200$ epochs based on validation RMSE. 
For every dataset, prediction scenario and seed, we select the model that achieves the best validation RMSE and report its corresponding test RMSE and test CRPS. 
The final test metrics are obtained by averaging these values across the 5 seeds. 
For the NP-based models, we estimate CRPS using 400 samples from the
predictive distribution.
The $400$ samples are obtained by sampling the latent variable in the NP model $20$ times. 
For each latent sample, the NP decoder outputs a conditional predictive distribution, from which we draw $20$ samples.

\ifarxiv
The source code is available at
\url{https://github.com/ken-r/npboost}.
\else
The source code will be made publicly available upon publication.
\fi
The NP components of NPBoost and ANPBoost follow the implementation of  \citet{dubois2020npf}, and their boosted-tree components use LightGBM. 
Appendix~\ref{app:hardware} describes the hardware on which we ran the experiments. 

We fix the following NP architecture.
The representation dimension and the latent dimension are  \(d_r = d_z = 128\); the encoder networks \(h_{\phi_h}\) and \(g_{\phi_g}\) have hidden dimensions equal to \([256,256]\); the decoder network has hidden dimensions equal to \([128,128,128,128]\); all activation functions are ReLU; and the number of latent samples is \(L=20\).
We sample new context--target splits in every epoch, with context proportion equal to \(0.5\) in the within-task scenario and context size equal to the number of support points provided during validation and testing in the few-shot scenario.

For the attentive NP models (Figure~\ref{fig:app_anp_architecture}), the latent path is
augmented by a deterministic path with its own encoder of hidden dimensions \([256, 256]\) and
cross-attention with \(8\) heads.
Cross-attention uses the target inputs $X_i^{\mathcal T_i}$ as queries, the context inputs $X_i^{\mathcal C_i}$ as keys and the deterministic context representations (denoted by $r_i^{\text{det}, \mathcal C_i}$) as values.
This produces a target-specific deterministic representation (denoted by $r_i^{\text{det}, \mathcal T_i}$) which is passed to the decoder.
Figure~\ref{fig:app_np_architecture_overview} summarizes the architecture used for the NP components.

\begin{figure}[!htbp]
    \centering
    \begin{subfigure}[b]{0.48\linewidth}
        \centering
        \includegraphics[width=\linewidth]{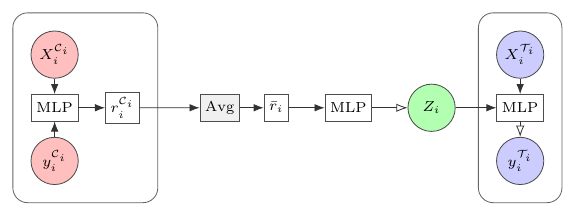}
        \caption{NP.}
        \label{fig:app_np_architecture}
    \end{subfigure}
    \hfill
    \begin{subfigure}[b]{0.48\linewidth}
        \centering
        \includegraphics[width=\linewidth]{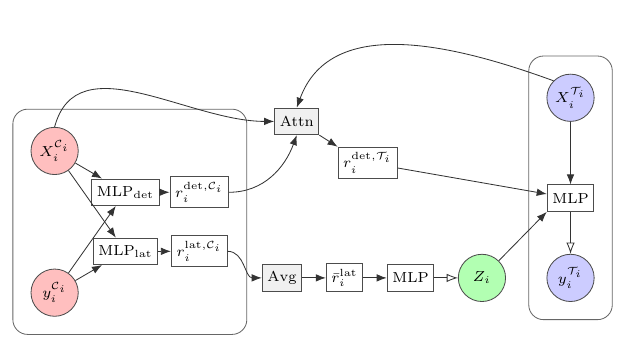}
        \caption{ANP.}
        \label{fig:app_anp_architecture}
    \end{subfigure}
    \caption{\textbf{Neural process architectures.} Following the style of \citet{dubois2020npf}, we show an overview of the NP and ANP architecture used in our implementation. Avg denotes taking the average, and Attn denotes cross-attention.
    Filled arrows represent deterministic information flow, while hollow arrows denote distributional outputs.
    }
    \label{fig:app_np_architecture_overview}
\end{figure}

For NP training, we optimize all model parameters using Adam \citep{kingma2014adam}.
We use a batch size of \(16\) tasks, and \(64\) for the \dataset{more-tasks} experiment.
We do not use padding.
Instead, we group tasks of the same size into batches and use gradient accumulation until at least \(16\) (or \(64\)) tasks have been processed, and then perform an optimizer step.
Gradients are clipped to a norm of \(1\) before each step.

In our decoder implementation, the conditional predictive variance is
$\sigma^2_\theta(x_{ij}, z_i) = \sigma_e^2(1+s_\theta(x_{ij},z_i))$, where $s_\theta(x_{ij},z_i)\geq 0$ is obtained by applying a softplus transformation to the corresponding decoder
output. 
For the additive form presented in Section~\ref{ss:background_on_nps}, we define
$v_\theta(x_{ij},z_i)=\sigma_e^2 s_\theta(x_{ij},z_i)$, such that
$\sigma^2_\theta(x_{ij}, z_i) = \sigma_e^2(1+s_\theta(x_{ij}, z_i)) = v_\theta(x_{ij}, z_i) + \sigma_e^2$, consistent with Equation~\eqref{eq:np_conditional_variance}

\paragraph{Mixed-Effects Baselines}
The mixed-effects baselines GPLinear and LME are fitted with the GPBoost library~\citep{sigrist2022gpboost} under a Gaussian likelihood.
They use linear fixed effects, which include an intercept and all features.
For LME, each task receives its own intercept and its own slope for every continuous feature.
For GPLinear, the task-specific part is a Gaussian process over all continuous features, using a Matérn covariance function with smoothness parameter $\nu = 1.5$.

\paragraph{TabICL Baselines}
Our TabICL baselines are implemented using the second-generation version of TabICL \citep{qu2026tabiclv2}.
TabICL is a pretrained open tabular foundation model and does not require any training or parameter tuning.
We use the default ensemble of \(8\) estimators.
We consider two variants: TabICL\_T (task-wise) and TabICL\_P (pooled).
TabICL\_T is called once per task, and receives only the context points available for that task, so that no task ID has to be passed.
TabICL\_P is called with all training points and, in the few-shot case, additionally with the context points of the task to be predicted.
Consequently, the task ID is necessary, and we pass it as a categorical variable.
In the within-task scenario, one single TabICL call is made, providing all the training points.
Pooled few-shot prediction still requires one call per task, since the context set is formed by all observations of the training tasks and additionally by the context points of the new task.
Point predictions use the predictive mean and are evaluated with the RMSE.
For CRPS, TabICL provides predictive quantiles. Following~\citet{fakoor2023flexible}, we use the fact that the CRPS can be expressed as an integral over quantile levels of the corresponding pinball loss. 
We numerically approximate this integral by discretizing over a grid of \(399\) quantiles.

\section{Parameter Tuning}
\label{app:parameter_tuning}

Table \ref{tab:hyperparameter_search_spaces} lists the hyperparameters that were tuned.
For every dataset, we ran \(32\) trials, using the same random seed for the sampler.
Consequently, across all datasets, prediction settings, and data-generation seeds, we evaluated an identical set of 32 hyperparameter configurations for the same model.

Since NPBoost and ANPBoost have the largest number of hyperparameters, the tuning has to cover a higher-dimensional search space than for the other models.
We therefore use more focused ranges for selected hyperparameters.
The ranges for the principal optimization parameters remain unchanged: the NP and tree learning rates are searched over the same ranges as for the corresponding standalone baselines.
The NP dropout probability is restricted to the endpoints $\{0, 0.2\}$ of the interval used for NP and ANP, and the number of leaves to $\{4, 8, 16\}$.
The maximum number of bins is fixed to  the library default of $255$.

LME and GPLinear have no tuning parameters.
We evaluate the TabICL models using the default configuration, without fine-tuning or hyperparameter optimization.
\begin{table*}[h!] 
\centering
\caption{Hyperparameter search spaces.
All searches use random search with \(32\) trials.
\(\operatorname{Log-Unif}\) distributions sample uniformly on the logarithmic scale.}
\label{tab:hyperparameter_search_spaces}
\small
\begin{tabular}{llll}
\toprule
Model & Component & Hyperparameter & Search space \\
\midrule
NP/ANP & NP & Dropout probability & \(\operatorname{Unif}([0,0.2])\) \\
NP/ANP & NP & Learning rate & \(\operatorname{Log-Unif}([10^{-5},10^{-3}])\) \\
\midrule
NPBoost/ANPBoost & NP & Dropout probability & \(\{0,0.2\}\)\\
NPBoost/ANPBoost & NP & Learning rate & \(\operatorname{Log-Unif}([10^{-5},10^{-3}])\) \\
NPBoost/ANPBoost & Alternation & NP epochs per boosting round & \(\{5,10,20\}\) \\
NPBoost/ANPBoost & Trees & Learning rate & \(\operatorname{Log-Unif}([0.01,1])\) \\
NPBoost/ANPBoost & Trees & \(L_2\) regularization & \(\{0,1\}\) \\
NPBoost/ANPBoost & Trees & Minimum observations per leaf & \(\{10,20,100,1000\}\) \\
NPBoost/ANPBoost & Trees & Number of leaves & \(\{4,8,16\}\) \\
NPBoost/ANPBoost & Trees & Maximum number of bins & \(\{255\}\) \\
\midrule
GBT/Task-ID GBT & Trees & Learning rate & \(\operatorname{Log-Unif}([0.01,1])\) \\
GBT/Task-ID GBT & Trees & \(L_2\) regularization & \(\{0,1\}\) \\
GBT/Task-ID GBT & Trees & Minimum observations per leaf & \(\{10,20,100,1000\}\) \\
GBT/Task-ID GBT & Trees & Number of leaves & \(\{2,4,8,16,32,64\}\) \\
GBT/Task-ID GBT & Trees & Maximum number of bins & \(\{255,500,1000\}\) \\
\bottomrule
\end{tabular}
\end{table*}

\section{Complete Results}
\label{app:complete_results}

Tables~\ref{tab:reference_results}--\ref{tab:irrelevant_features_results} report the complete results for the synthetic dataset, including the \dataset{GP-regression} dataset and the GBT baseline without task ID.
Tables~\ref{tab:cars_results}--\ref{tab:bikes_results} report the complete results for the real-world datasets, including the GBT baseline without task ID.

\begin{table*}[ht!]
    \centering
    \caption{Predictive performance on the synthetic \dataset{reference} dataset. Entries are reported as mean $\pm$ standard error over \(5\) seeds.}
    \label{tab:reference_results}
    \scriptsize
    \setlength{\tabcolsep}{3pt}

    \resizebox{\textwidth}{!}{%
    \begin{tabular}{lllcccccccccc}
        \toprule
        Fixed Effect & Scenario & Metric & NPBoost & ANPBoost & NP & ANP & Task-ID GBT & GBT & GPLinear & LME & TabICL\_T & TabICL\_P \\
        \midrule
        \multirow{4}{*}{Steps $1$D} & within-task & RMSE & \textbf{0.576}$\pm$0.001 & \textbf{0.576}$\pm$0.001 & 0.596$\pm$0.001 & \textbf{0.576}$\pm$0.001 & 0.594$\pm$0.002 & 1.115$\pm$0.004 & 0.685$\pm$0.002 & 1.274$\pm$0.010 & 0.703$\pm$0.003 & 1.104$\pm$0.004 \\
         & within-task & CRPS & \textbf{0.325}$\pm$0.001 & \textbf{0.325}$\pm$0.001 & 0.335$\pm$0.001 & 0.326$\pm$0.001 & -- & -- & 0.379$\pm$0.001 & 0.725$\pm$0.005 & 0.391$\pm$0.001 & 0.621$\pm$0.003 \\
         & few-shot & RMSE & 0.688$\pm$0.010 & 0.659$\pm$0.008 & 0.700$\pm$0.008 & \textbf{0.658}$\pm$0.009 & 1.118$\pm$0.019 & 1.115$\pm$0.019 & 0.836$\pm$0.005 & 1.300$\pm$0.016 & 0.912$\pm$0.007 & 1.137$\pm$0.020 \\
         & few-shot & CRPS & 0.387$\pm$0.006 & 0.371$\pm$0.004 & 0.393$\pm$0.004 & \textbf{0.370}$\pm$0.005 & -- & -- & 0.455$\pm$0.003 & 0.741$\pm$0.009 & 0.508$\pm$0.003 & 0.641$\pm$0.011 \\
        \cmidrule(lr){1-13}
        \multirow{4}{*}{Steps $2$D} & within-task & RMSE & 0.862$\pm$0.003 & \textbf{0.860}$\pm$0.002 & 0.906$\pm$0.002 & 0.866$\pm$0.002 & 0.934$\pm$0.004 & 1.121$\pm$0.003 & 0.953$\pm$0.003 & 1.452$\pm$0.002 & 1.091$\pm$0.005 & 1.125$\pm$0.003 \\
         & within-task & CRPS & 0.488$\pm$0.002 & \textbf{0.484}$\pm$0.001 & 0.519$\pm$0.004 & 0.486$\pm$0.001 & -- & -- & 0.527$\pm$0.001 & 0.824$\pm$0.001 & 0.610$\pm$0.003 & 0.634$\pm$0.002 \\
         & few-shot & RMSE & 1.015$\pm$0.006 & 0.973$\pm$0.006 & 1.031$\pm$0.008 & \textbf{0.972}$\pm$0.006 & 1.118$\pm$0.009 & 1.118$\pm$0.009 & 1.182$\pm$0.002 & 1.456$\pm$0.003 & 1.369$\pm$0.003 & 1.127$\pm$0.008 \\
         & few-shot & CRPS & 0.572$\pm$0.004 & \textbf{0.546}$\pm$0.004 & 0.582$\pm$0.005 & 0.547$\pm$0.004 & -- & -- & 0.653$\pm$0.001 & 0.826$\pm$0.002 & 0.785$\pm$0.002 & 0.635$\pm$0.004 \\
        \bottomrule
    \end{tabular}%
    }
\end{table*}

\begin{table*}[ht!]
    \centering
    \caption{Predictive performance on the synthetic \dataset{GP-regression} dataset without any fixed effects. Entries are reported as mean $\pm$ standard error over \(5\) seeds.}
    \label{tab:gp_regression_results}
    \scriptsize
    \setlength{\tabcolsep}{3pt}

    \resizebox{\textwidth}{!}{%
    \begin{tabular}{lllcccccccccc}
        \toprule
        Fixed Effect & Scenario & Metric & NPBoost & ANPBoost & NP & ANP & Task-ID GBT & GBT & GPLinear & LME & TabICL\_T & TabICL\_P \\
        \midrule
        \multirow{4}{*}{Zero $1$D} & within-task & RMSE & 0.568$\pm$0.002 & 0.567$\pm$0.001 & 0.569$\pm$0.002 & 0.566$\pm$0.002 & 0.579$\pm$0.002 & 1.114$\pm$0.004 & \textbf{0.547}$\pm$0.001 & 0.884$\pm$0.004 & 0.574$\pm$0.001 & 0.776$\pm$0.044 \\
         & within-task & CRPS & 0.322$\pm$0.001 & 0.321$\pm$0.001 & 0.322$\pm$0.001 & 0.320$\pm$0.001 & -- & -- & \textbf{0.309}$\pm$0.001 & 0.507$\pm$0.002 & 0.332$\pm$0.001 & 0.422$\pm$0.025 \\
         & few-shot & RMSE & 0.673$\pm$0.008 & 0.645$\pm$0.006 & 0.689$\pm$0.007 & 0.640$\pm$0.007 & 1.112$\pm$0.019 & 1.112$\pm$0.019 & \textbf{0.607}$\pm$0.004 & 0.908$\pm$0.015 & 0.675$\pm$0.007 & 0.867$\pm$0.049 \\
         & few-shot & CRPS & 0.379$\pm$0.005 & 0.363$\pm$0.004 & 0.388$\pm$0.004 & 0.360$\pm$0.004 & -- & -- & \textbf{0.341}$\pm$0.002 & 0.520$\pm$0.007 & 0.402$\pm$0.003 & 0.474$\pm$0.028 \\
        \cmidrule(lr){1-13}
        \multirow{4}{*}{Zero $2$D} & within-task & RMSE & 0.836$\pm$0.002 & 0.829$\pm$0.003 & 0.843$\pm$0.003 & 0.831$\pm$0.003 & 0.927$\pm$0.003 & 1.119$\pm$0.003 & \textbf{0.749}$\pm$0.002 & 1.049$\pm$0.003 & 0.873$\pm$0.002 & 1.103$\pm$0.010 \\
         & within-task & CRPS & 0.483$\pm$0.003 & 0.466$\pm$0.002 & 0.483$\pm$0.002 & 0.467$\pm$0.002 & -- & -- & \textbf{0.417}$\pm$0.001 & 0.592$\pm$0.002 & 0.497$\pm$0.001 & 0.621$\pm$0.007 \\
         & few-shot & RMSE & 1.011$\pm$0.007 & 0.951$\pm$0.007 & 1.019$\pm$0.008 & 0.950$\pm$0.007 & 1.115$\pm$0.009 & 1.116$\pm$0.009 & \textbf{0.908}$\pm$0.007 & 1.063$\pm$0.007 & 1.016$\pm$0.007 & 1.133$\pm$0.007 \\
         & few-shot & CRPS & 0.570$\pm$0.004 & 0.533$\pm$0.004 & 0.575$\pm$0.005 & 0.533$\pm$0.004 & -- & -- & \textbf{0.505}$\pm$0.004 & 0.601$\pm$0.004 & 0.589$\pm$0.005 & 0.641$\pm$0.004 \\
        \bottomrule
    \end{tabular}%
    }
\end{table*}

\begin{table*}[ht!]
    \centering
    \caption{Predictive performance on the synthetic \dataset{more-tasks} dataset. Entries are reported as mean $\pm$ standard error over \(5\) seeds.}
    \label{tab:more_tasks_results}
    \scriptsize
    \setlength{\tabcolsep}{3pt}

    \resizebox{\textwidth}{!}{%
    \begin{tabular}{lllcccccccccc}
        \toprule
        Fixed Effect & Scenario & Metric & NPBoost & ANPBoost & NP & ANP & Task-ID GBT & GBT & GPLinear & LME & TabICL\_T & TabICL\_P \\
        \midrule
        \multirow{4}{*}{Steps $1$D} & within-task & RMSE & 0.628$\pm$0.003 & \textbf{0.623}$\pm$0.002 & 0.646$\pm$0.002 & \textbf{0.623}$\pm$0.002 & 0.658$\pm$0.003 & 1.119$\pm$0.001 & 0.778$\pm$0.002 & 1.288$\pm$0.003 & 0.833$\pm$0.002 & 1.116$\pm$0.001 \\
         & within-task & CRPS & 0.353$\pm$0.002 & \textbf{0.351}$\pm$0.001 & 0.362$\pm$0.001 & \textbf{0.351}$\pm$0.001 & -- & -- & 0.427$\pm$0.001 & 0.734$\pm$0.001 & 0.468$\pm$0.001 & 0.630$\pm$0.001 \\
         & few-shot & RMSE & 0.767$\pm$0.008 & \textbf{0.743}$\pm$0.008 & 0.773$\pm$0.007 & 0.749$\pm$0.009 & 1.119$\pm$0.010 & 1.119$\pm$0.010 & 1.001$\pm$0.007 & 1.319$\pm$0.008 & 1.121$\pm$0.005 & 1.121$\pm$0.010 \\
         & few-shot & CRPS & 0.427$\pm$0.004 & \textbf{0.414}$\pm$0.004 & 0.431$\pm$0.003 & 0.417$\pm$0.004 & -- & -- & 0.542$\pm$0.004 & 0.752$\pm$0.004 & 0.640$\pm$0.003 & 0.633$\pm$0.006 \\
        \cmidrule(lr){1-13}
        \multirow{4}{*}{Steps $2$D} & within-task & RMSE & 0.974$\pm$0.003 & \textbf{0.954}$\pm$0.004 & 1.001$\pm$0.006 & \textbf{0.954}$\pm$0.004 & 1.032$\pm$0.004 & 1.120$\pm$0.003 & 1.115$\pm$0.004 & 1.462$\pm$0.003 & 1.337$\pm$0.003 & 1.126$\pm$0.003 \\
         & within-task & CRPS & 0.549$\pm$0.002 & 0.537$\pm$0.002 & 0.564$\pm$0.003 & \textbf{0.536}$\pm$0.002 & -- & -- & 0.616$\pm$0.002 & 0.831$\pm$0.001 & 0.764$\pm$0.001 & 0.635$\pm$0.002 \\
         & few-shot & RMSE & 1.043$\pm$0.004 & 1.025$\pm$0.004 & 1.054$\pm$0.005 & \textbf{1.024}$\pm$0.004 & 1.115$\pm$0.005 & 1.115$\pm$0.005 & 1.305$\pm$0.005 & 1.473$\pm$0.009 & 1.511$\pm$0.010 & 1.121$\pm$0.005 \\
         & few-shot & CRPS & 0.588$\pm$0.002 & \textbf{0.577}$\pm$0.003 & 0.595$\pm$0.003 & \textbf{0.577}$\pm$0.002 & -- & -- & 0.728$\pm$0.003 & 0.838$\pm$0.005 & 0.897$\pm$0.006 & 0.633$\pm$0.003 \\
        \bottomrule
    \end{tabular}%
    }
\end{table*}

\begin{table*}[ht!]
    \centering
    \caption{Predictive performance on the synthetic \dataset{Brownian} dataset. Entries are reported as mean $\pm$ standard error over \(5\) seeds.}
    \label{tab:brownian_results}
    \scriptsize
    \setlength{\tabcolsep}{3pt}

    \resizebox{\textwidth}{!}{%
    \begin{tabular}{lllcccccccccc}
        \toprule
        Fixed Effect & Scenario & Metric & NPBoost & ANPBoost & NP & ANP & Task-ID GBT & GBT & GPLinear & LME & TabICL\_T & TabICL\_P \\
        \midrule
        \multirow{4}{*}{Steps $1$D} & within-task & RMSE & 0.610$\pm$0.001 & \textbf{0.608}$\pm$0.002 & 0.628$\pm$0.002 & 0.611$\pm$0.002 & 0.619$\pm$0.002 & 1.485$\pm$0.004 & 0.710$\pm$0.002 & 1.178$\pm$0.005 & 0.724$\pm$0.002 & 1.435$\pm$0.010 \\
         & within-task & CRPS & 0.345$\pm$0.001 & \textbf{0.344}$\pm$0.001 & 0.353$\pm$0.001 & 0.345$\pm$0.001 & -- & -- & 0.393$\pm$0.001 & 0.702$\pm$0.003 & 0.404$\pm$0.001 & 0.773$\pm$0.008 \\
         & few-shot & RMSE & 0.686$\pm$0.004 & \textbf{0.670}$\pm$0.005 & 0.700$\pm$0.005 & 0.673$\pm$0.004 & 1.472$\pm$0.033 & 1.469$\pm$0.034 & 0.874$\pm$0.009 & 1.199$\pm$0.011 & 0.911$\pm$0.006 & 1.535$\pm$0.019 \\
         & few-shot & CRPS & 0.387$\pm$0.002 & \textbf{0.378}$\pm$0.002 & 0.393$\pm$0.002 & 0.380$\pm$0.002 & -- & -- & 0.476$\pm$0.004 & 0.713$\pm$0.005 & 0.514$\pm$0.004 & 0.840$\pm$0.009 \\
        \cmidrule(lr){1-13}
        \multirow{4}{*}{Steps $2$D} & within-task & RMSE & 0.945$\pm$0.004 & \textbf{0.942}$\pm$0.004 & 0.976$\pm$0.004 & 0.952$\pm$0.003 & 0.972$\pm$0.007 & 2.043$\pm$0.008 & 1.077$\pm$0.005 & 1.585$\pm$0.003 & 1.146$\pm$0.003 & 2.020$\pm$0.012 \\
         & within-task & CRPS & 0.525$\pm$0.002 & \textbf{0.521}$\pm$0.002 & 0.541$\pm$0.002 & 0.527$\pm$0.002 & -- & -- & 0.598$\pm$0.003 & 0.930$\pm$0.002 & 0.646$\pm$0.002 & 1.036$\pm$0.004 \\
         & few-shot & RMSE & 1.114$\pm$0.007 & \textbf{1.077}$\pm$0.011 & 1.145$\pm$0.008 & 1.090$\pm$0.013 & 2.004$\pm$0.038 & 2.000$\pm$0.037 & 1.303$\pm$0.008 & 1.630$\pm$0.010 & 1.465$\pm$0.011 & 2.041$\pm$0.029 \\
         & few-shot & CRPS & 0.609$\pm$0.003 & \textbf{0.589}$\pm$0.006 & 0.626$\pm$0.005 & 0.598$\pm$0.007 & -- & -- & 0.718$\pm$0.003 & 0.954$\pm$0.005 & 0.826$\pm$0.005 & 1.053$\pm$0.016 \\
        \bottomrule
    \end{tabular}%
    }
\end{table*}

\begin{table*}[ht!]
    \centering
    \caption{Predictive performance on the synthetic \dataset{Brownian-LN} dataset. Entries are reported as mean $\pm$ standard error over \(5\) seeds.}
    \label{tab:brownian_lognormal_results}
    \scriptsize
    \setlength{\tabcolsep}{3pt}

    \resizebox{\textwidth}{!}{%
    \begin{tabular}{lllcccccccccc}
        \toprule
        Fixed Effect & Scenario & Metric & NPBoost & ANPBoost & NP & ANP & Task-ID GBT & GBT & GPLinear & LME & TabICL\_T & TabICL\_P \\
        \midrule
        \multirow{4}{*}{Steps $1$D} & within-task & RMSE & 0.598$\pm$0.005 & \textbf{0.594}$\pm$0.003 & 0.619$\pm$0.003 & 0.597$\pm$0.004 & 0.622$\pm$0.006 & 1.486$\pm$0.004 & 0.706$\pm$0.006 & 1.179$\pm$0.007 & 0.721$\pm$0.006 & 1.424$\pm$0.014 \\
         & within-task & CRPS & 0.292$\pm$0.001 & \textbf{0.289}$\pm$0.001 & 0.303$\pm$0.001 & 0.291$\pm$0.001 & -- & -- & 0.358$\pm$0.001 & 0.697$\pm$0.003 & 0.335$\pm$0.001 & 0.749$\pm$0.012 \\
         & few-shot & RMSE & 0.658$\pm$0.010 & 0.641$\pm$0.007 & 0.682$\pm$0.012 & \textbf{0.638}$\pm$0.006 & 1.468$\pm$0.037 & 1.465$\pm$0.036 & 0.865$\pm$0.012 & 1.192$\pm$0.008 & 0.902$\pm$0.006 & 1.528$\pm$0.026 \\
         & few-shot & CRPS & 0.343$\pm$0.004 & 0.329$\pm$0.003 & 0.353$\pm$0.006 & \textbf{0.328}$\pm$0.003 & -- & -- & 0.445$\pm$0.004 & 0.706$\pm$0.004 & 0.455$\pm$0.002 & 0.826$\pm$0.015 \\
        \cmidrule(lr){1-13}
        \multirow{4}{*}{Steps $2$D} & within-task & RMSE & \textbf{0.936}$\pm$0.004 & 0.937$\pm$0.003 & 0.974$\pm$0.001 & 0.951$\pm$0.005 & 0.970$\pm$0.007 & 2.044$\pm$0.007 & 1.073$\pm$0.005 & 1.585$\pm$0.005 & 1.138$\pm$0.006 & 2.021$\pm$0.011 \\
         & within-task & CRPS & 0.499$\pm$0.002 & \textbf{0.498}$\pm$0.001 & 0.519$\pm$0.001 & 0.505$\pm$0.002 & -- & -- & 0.582$\pm$0.002 & 0.926$\pm$0.002 & 0.616$\pm$0.001 & 1.026$\pm$0.004 \\
         & few-shot & RMSE & 1.090$\pm$0.010 & \textbf{1.059}$\pm$0.010 & 1.138$\pm$0.012 & 1.068$\pm$0.011 & 2.004$\pm$0.038 & 1.998$\pm$0.039 & 1.297$\pm$0.010 & 1.625$\pm$0.008 & 1.457$\pm$0.011 & 2.041$\pm$0.030 \\
         & few-shot & CRPS & 0.584$\pm$0.005 & \textbf{0.564}$\pm$0.005 & 0.608$\pm$0.006 & 0.570$\pm$0.006 & -- & -- & 0.704$\pm$0.004 & 0.949$\pm$0.005 & 0.811$\pm$0.006 & 1.043$\pm$0.017 \\
        \bottomrule
    \end{tabular}%
    }
\end{table*}

\begin{table*}[ht!]
\centering
    \caption{Predictive performance on the synthetic \dataset{irr-feat} dataset. Entries are reported as mean $\pm$ standard error over \(5\) seeds.}
    \label{tab:irrelevant_features_results}
    \scriptsize
    \setlength{\tabcolsep}{3pt}

    \resizebox{\textwidth}{!}{%
    \begin{tabular}{lllcccccccccc}
        \toprule
        Fixed Effect & Scenario & Metric & NPBoost & ANPBoost & NP & ANP & Task-ID GBT & GBT & GPLinear & LME & TabICL\_T & TabICL\_P \\
        \midrule
        \multirow{4}{*}{Steps $2$D} & within-task & RMSE & 0.917$\pm$0.003 & \textbf{0.884}$\pm$0.002 & 0.951$\pm$0.005 & 0.892$\pm$0.003 & 0.983$\pm$0.004 & 1.121$\pm$0.004 & 1.318$\pm$0.002 & 1.443$\pm$0.004 & 1.259$\pm$0.003 & 1.124$\pm$0.004 \\
         & within-task & CRPS & 0.518$\pm$0.002 & \textbf{0.498}$\pm$0.001 & 0.537$\pm$0.003 & 0.502$\pm$0.002 & -- & -- & 0.742$\pm$0.001 & 0.820$\pm$0.002 & 0.708$\pm$0.001 & 0.634$\pm$0.003 \\
         & few-shot & RMSE & 1.024$\pm$0.006 & \textbf{0.963}$\pm$0.004 & 1.032$\pm$0.005 & 0.968$\pm$0.005 & 1.114$\pm$0.009 & 1.115$\pm$0.008 & 1.403$\pm$0.005 & 1.458$\pm$0.006 & 1.518$\pm$0.004 & 1.120$\pm$0.008 \\
         & few-shot & CRPS & 0.578$\pm$0.003 & \textbf{0.541}$\pm$0.003 & 0.583$\pm$0.003 & 0.544$\pm$0.004 & -- & -- & 0.793$\pm$0.003 & 0.829$\pm$0.004 & 0.889$\pm$0.004 & 0.631$\pm$0.005 \\
        \bottomrule
    \end{tabular}%
    }
\end{table*}

\vspace{2cm}

\begin{table*}[ht!]
    \centering
    \caption{Predictive performance on the \dataset{cars} dataset. Entries are reported as mean $\pm$ standard error over \(5\) seeds.}
    \label{tab:cars_results}
    \scriptsize
    \setlength{\tabcolsep}{3pt}

    \resizebox{\textwidth}{!}{%
    \begin{tabular}{lccccccccccc}
        \toprule
        Scenario & Metric & NPBoost & ANPBoost & NP & ANP & Task-ID GBT & GBT & GPLinear & LME & TabICL\_T & TabICL\_P \\
        \midrule
        within-task & RMSE & \textbf{0.257}$\pm$0.002 & 0.258$\pm$0.003 & 0.266$\pm$0.003 & 0.270$\pm$0.003 & 0.271$\pm$0.003 & 0.354$\pm$0.002 & 0.297$\pm$0.003 & 0.318$\pm$0.003 & 0.303$\pm$0.003 & 0.316$\pm$0.002 \\
        within-task & CRPS & 0.127$\pm$0.001 & 0.134$\pm$0.006 & \textbf{0.125}$\pm$0.001 & 0.128$\pm$0.001 & -- & -- & 0.138$\pm$0.001 & 0.174$\pm$0.001 & 0.134$\pm$0.001 & 0.151$\pm$0.001 \\
        few-shot & RMSE & 0.272$\pm$0.010 & \textbf{0.265}$\pm$0.008 & 0.281$\pm$0.009 & 0.288$\pm$0.012 & 0.401$\pm$0.018 & 0.367$\pm$0.011 & 0.368$\pm$0.023 & 0.373$\pm$0.035 & 0.456$\pm$0.025 & 0.387$\pm$0.013 \\
        few-shot & CRPS & 0.142$\pm$0.008 & \textbf{0.136}$\pm$0.002 & \textbf{0.136}$\pm$0.004 & 0.143$\pm$0.003 & -- & -- & 0.177$\pm$0.008 & 0.193$\pm$0.006 & 0.231$\pm$0.012 & 0.206$\pm$0.008 \\
        \bottomrule
    \end{tabular}%
    }
\end{table*}

\begin{table*}[ht!]
    \centering
    \caption{Predictive performance on the \dataset{Spotify} dataset. Entries are reported as mean $\pm$ standard error over \(5\) seeds. The within-task CRPS for GPLinear is unavailable because NA predictive standard deviations occurred for all seeds.}
    \label{tab:spotify_results}
    \scriptsize
    \setlength{\tabcolsep}{3pt}

    \resizebox{\textwidth}{!}{%
    \begin{tabular}{lccccccccccc}
        \toprule
        Scenario & Metric & NPBoost & ANPBoost & NP & ANP & Task-ID GBT & GBT & GPLinear & LME & TabICL\_T & TabICL\_P \\
        \midrule
        within-task & RMSE & 0.0907$\pm$0.0007 & 0.0912$\pm$0.0006 & 0.0922$\pm$0.0007 & 0.0928$\pm$0.0007 & 0.0928$\pm$0.0008 & 0.0970$\pm$0.0004 & 0.1012$\pm$0.0005 & 0.1021$\pm$0.0010 & 0.1119$\pm$0.0006 & \textbf{0.0874}$\pm$0.0006 \\
        within-task & CRPS & 0.1393$\pm$0.0235 & 0.1662$\pm$0.0120 & 0.1242$\pm$0.0172 & 0.1295$\pm$0.0242 & -- & -- & -- & 0.0572$\pm$0.0005 & 0.0592$\pm$0.0002 & \textbf{0.0447}$\pm$0.0003 \\
        few-shot & RMSE & 0.0979$\pm$0.0043 & 0.0998$\pm$0.0047 & 0.0998$\pm$0.0043 & 0.1012$\pm$0.0044 & 0.1240$\pm$0.0042 & 0.1015$\pm$0.0033 & 0.1102$\pm$0.0040 & 0.1057$\pm$0.0035 & 0.1226$\pm$0.0038 & \textbf{0.0909}$\pm$0.0032 \\
        few-shot & CRPS & 0.2032$\pm$0.0034 & 0.2093$\pm$0.0026 & 0.1630$\pm$0.0085 & 0.1738$\pm$0.0189 & -- & -- & 0.0590$\pm$0.0022 & 0.0593$\pm$0.0017 & 0.0673$\pm$0.0020 & \textbf{0.0480}$\pm$0.0018 \\
        \bottomrule
    \end{tabular}%
    }
\end{table*}

\begin{table*}[ht!]
    \centering
    \caption{Predictive performance on the \dataset{cows} dataset. Entries are reported as mean $\pm$ standard error over \(5\) seeds.}
    \label{tab:cows_results}
    \scriptsize
    \setlength{\tabcolsep}{3pt}

    \resizebox{\textwidth}{!}{%
    \begin{tabular}{lccccccccccc}
        \toprule
        Scenario & Metric & NPBoost & ANPBoost & NP & ANP & Task-ID GBT & GBT & GPLinear & LME & TabICL\_T & TabICL\_P \\
        \midrule
        within-task & RMSE & \textbf{0.232}$\pm$0.007 & 0.234$\pm$0.010 & 0.245$\pm$0.006 & 0.245$\pm$0.006 & 0.266$\pm$0.011 & 0.303$\pm$0.007 & 0.239$\pm$0.005 & 0.272$\pm$0.006 & 0.263$\pm$0.004 & 0.239$\pm$0.006 \\
        within-task & CRPS & 0.161$\pm$0.018 & 0.176$\pm$0.015 & 0.137$\pm$0.004 & 0.165$\pm$0.019 & -- & -- & \textbf{0.130}$\pm$0.002 & 0.152$\pm$0.003 & 0.153$\pm$0.002 & \textbf{0.130}$\pm$0.003 \\
        few-shot & RMSE & 0.258$\pm$0.014 & 0.249$\pm$0.011 & 0.270$\pm$0.014 & 0.254$\pm$0.011 & 0.317$\pm$0.011 & 0.296$\pm$0.012 & 0.253$\pm$0.014 & 0.280$\pm$0.010 & 0.293$\pm$0.013 & \textbf{0.236}$\pm$0.012 \\
        few-shot & CRPS & 0.167$\pm$0.012 & 0.173$\pm$0.010 & 0.174$\pm$0.013 & 0.191$\pm$0.010 & -- & -- & 0.140$\pm$0.008 & 0.158$\pm$0.005 & 0.174$\pm$0.009 & \textbf{0.131}$\pm$0.006 \\
        \bottomrule
    \end{tabular}%
    }
\end{table*}

\begin{table*}[ht!]
    \centering
    \caption{Predictive performance on the \dataset{bikes} dataset. Entries are reported as mean $\pm$ standard error over \(5\) seeds.}
    \label{tab:bikes_results}
    \scriptsize
    \setlength{\tabcolsep}{3pt}

    \resizebox{\textwidth}{!}{%
    \begin{tabular}{lccccccccccc}
        \toprule
        Scenario & Metric & NPBoost & ANPBoost & NP & ANP & Task-ID GBT & GBT & GPLinear & LME & TabICL\_T & TabICL\_P \\
        \midrule
        within-task & RMSE & 0.287$\pm$0.009 & \textbf{0.282}$\pm$0.009 & 0.294$\pm$0.008 & 0.296$\pm$0.010 & 0.302$\pm$0.006 & 0.322$\pm$0.006 & 0.429$\pm$0.005 & 0.580$\pm$0.006 & 0.547$\pm$0.008 & 0.309$\pm$0.006 \\
        within-task & CRPS & 0.129$\pm$0.008 & 0.124$\pm$0.006 & 0.121$\pm$0.003 & 0.124$\pm$0.006 & -- & -- & 0.200$\pm$0.002 & 0.325$\pm$0.002 & 0.276$\pm$0.004 & \textbf{0.108}$\pm$0.001 \\
        few-shot & RMSE & 0.588$\pm$0.070 & 0.550$\pm$0.058 & 0.577$\pm$0.063 & 0.551$\pm$0.054 & 0.706$\pm$0.091 & 0.755$\pm$0.115 & 0.646$\pm$0.086 & 0.664$\pm$0.029 & 0.690$\pm$0.032 & \textbf{0.351}$\pm$0.028 \\
        few-shot & CRPS & 0.340$\pm$0.032 & 0.322$\pm$0.022 & 0.316$\pm$0.036 & 0.292$\pm$0.036 & -- & -- & 0.325$\pm$0.040 & 0.395$\pm$0.020 & 0.375$\pm$0.016 & \textbf{0.151}$\pm$0.010 \\
        \bottomrule
    \end{tabular}%
    }
\end{table*}

\section{Model Plots}
\label{app:model_plots}
For the \dataset{reference} experiment and the Steps 1D fixed effect, we display the best-performing model for seed 0 under both prediction settings for each model class.

Figure~\ref{fig:app_coverage_calibration} shows the calibration curves for all NP-based models. They exhibit very good calibration.
ANPBoost achieves the lowest Mean Absolute Calibration Error (MACE) in within-task, whereas in few-shot, NP achieves the lowest MACE.

For the dense prediction visualizations (Figures~\ref{fig:reference_dense_predictions_within_task}--\ref{fig:reference_dense_predictions_few_shot}), predictive intervals for NP-based models are obtained using $20$ latent samples and $20$ decoder samples, and then taking empirical quantiles. For TabICL, we use the prediction quantiles directly provided by the model.

\begin{figure}[!htbp]
    \centering
    \includegraphics[width=1.0\linewidth]{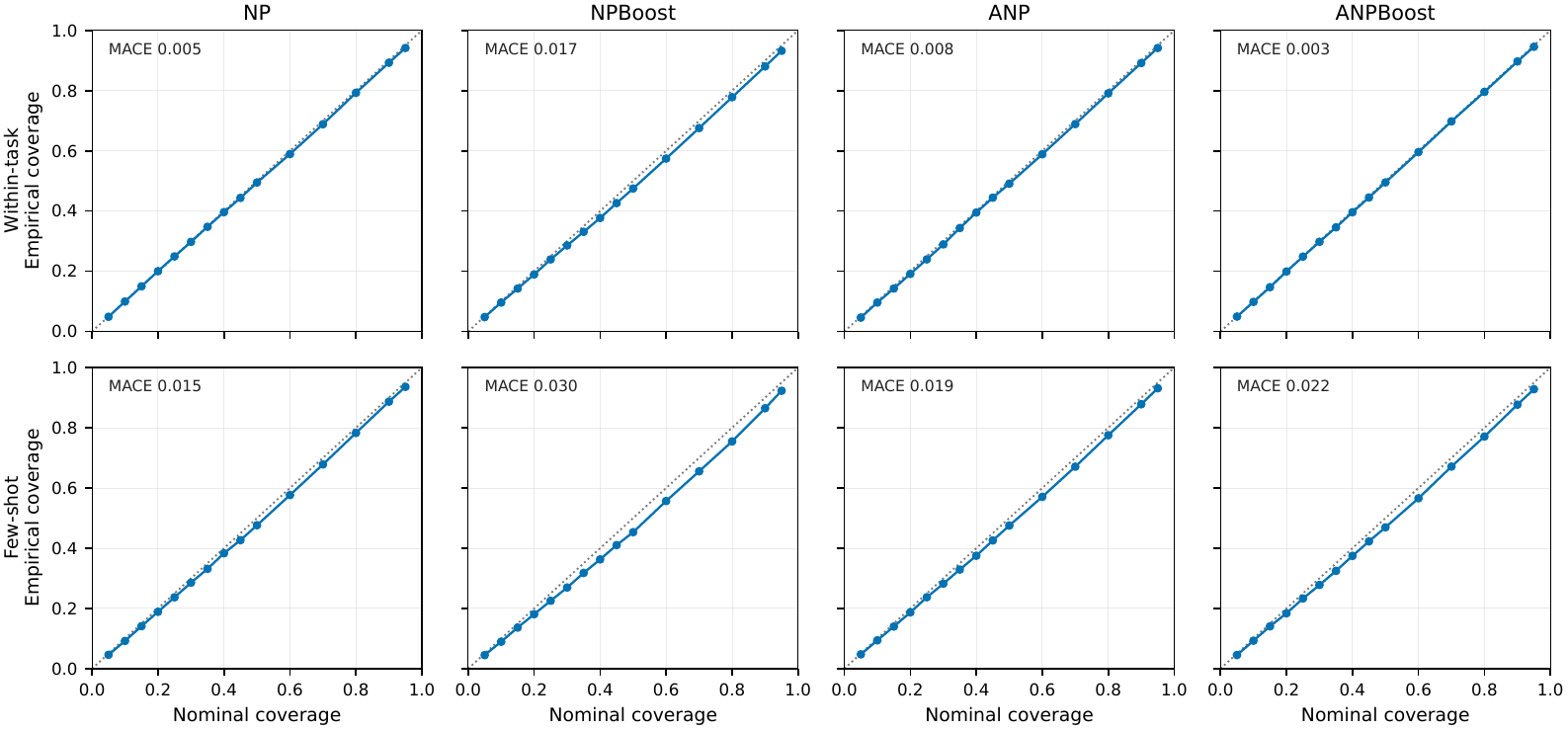}
    \caption{\textbf{Coverage calibration.} Predictive-interval coverage calibration for the one-dimensional \dataset{reference}  setting.}
    \label{fig:app_coverage_calibration}
\end{figure}

\begin{figure}[!htbp]
    \centering
    \includegraphics[width=1.0\linewidth]{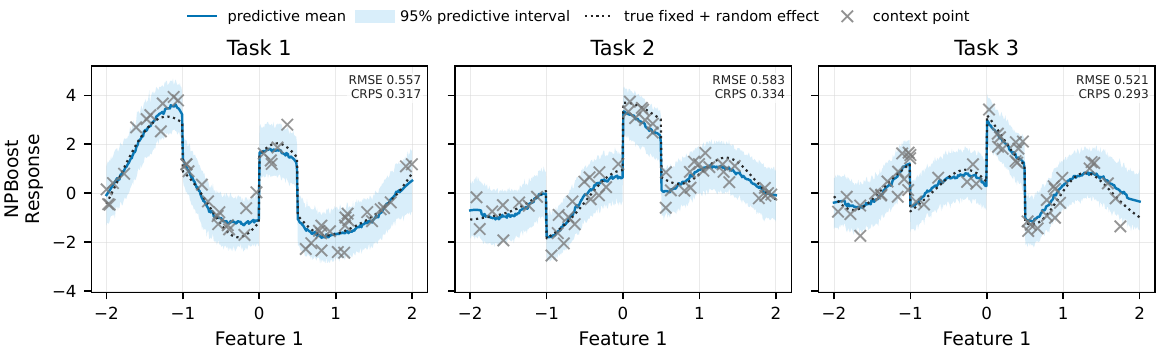}
    \caption{\textbf{Dense predictions (within-task).} Dense prediction plot for NPBoost for the one-dimensional \dataset{reference} setting in the within-task scenario.}
    \label{fig:reference_dense_predictions_within_task}
\end{figure}

\begin{figure}[!htbp]
    \centering
    \includegraphics[width=1.0\linewidth]{figures/model_predictions/gp_gaussian/steps_1D-few_shot-dense-npboost.pdf}
    \caption{\textbf{Dense predictions (few-shot).} Dense prediction plot for NPBoost for the one-dimensional \dataset{reference} setting in the few-shot scenario.}
    \label{fig:app_dense_predictions_few_shot}
\end{figure}

\begin{figure}[!htbp]
    \centering
    \includegraphics[width=1.0\linewidth]{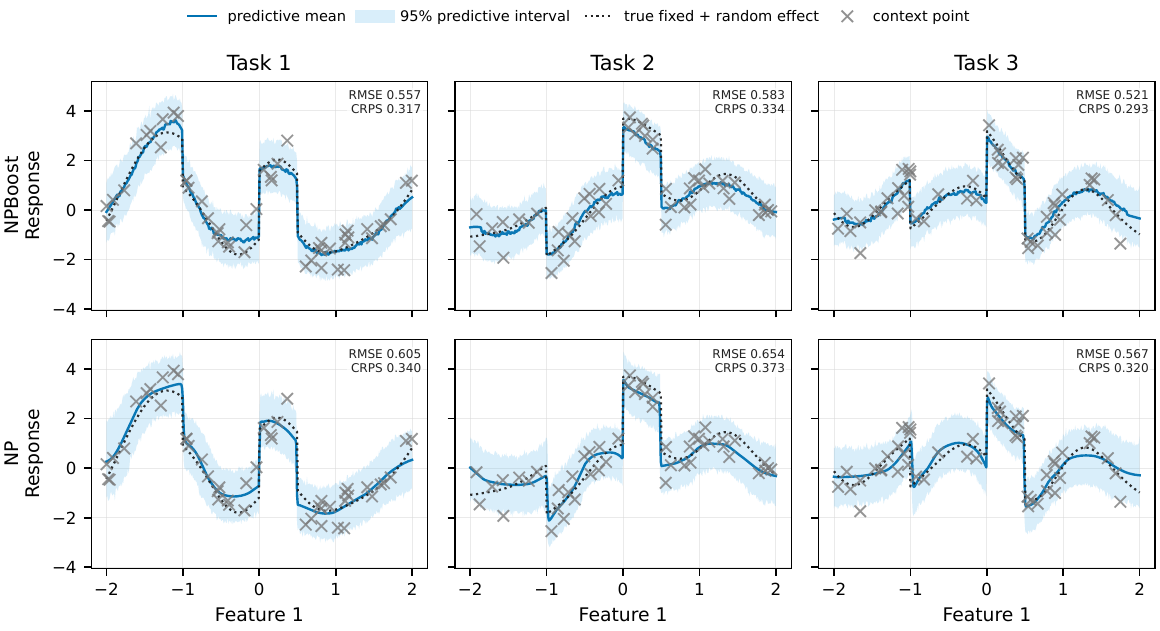}
    \caption{\textbf{Dense predictions (within-task).} Dense prediction plot for NPBoost and NP for the one-dimensional \dataset{reference} setting in the within-task scenario.}
    \label{fig:reference_dense_predictions_within_task_npboost_np}
\end{figure}

\begin{figure}[!htbp]
    \centering
    \includegraphics[width=1.0\linewidth]{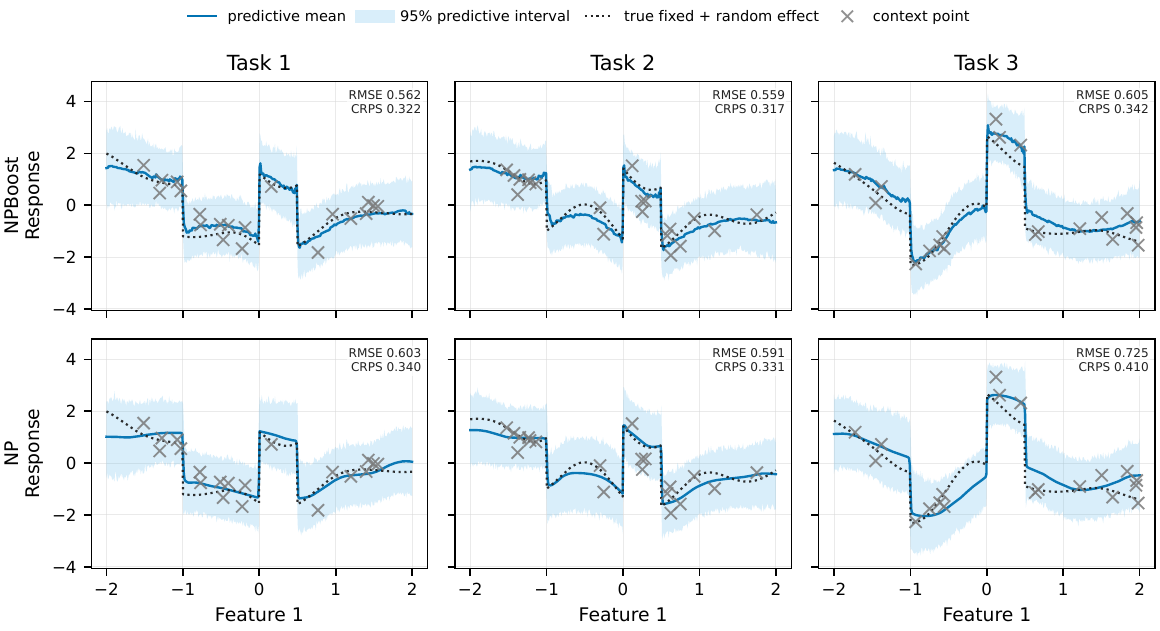}
    \caption{\textbf{Dense predictions (few-shot).} Dense prediction plot for NPBoost and NP for the one-dimensional \dataset{reference} setting in the few-shot scenario.}
    \label{fig:reference_dense_predictions_few_shot_npboost_np}
\end{figure}

\begin{figure}[!htbp]
    \centering
    \includegraphics[width=1.0\linewidth]{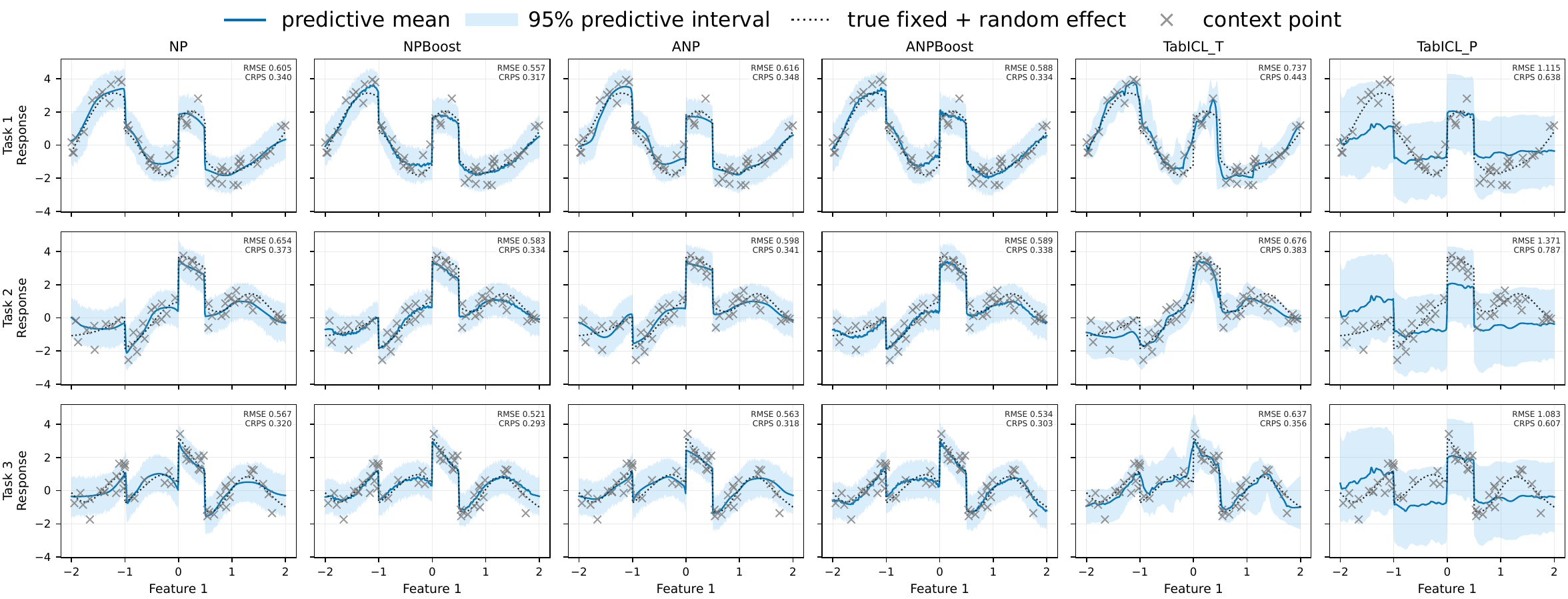}
    \caption{\textbf{Dense predictions (within-task).} Dense prediction plot for the one-dimensional \dataset{reference} setting in the within-task scenario.}
    \label{fig:app_dense_predictions_within_task}
\end{figure}

\begin{figure}[!htbp]
    \centering
    \includegraphics[width=1.0\linewidth]{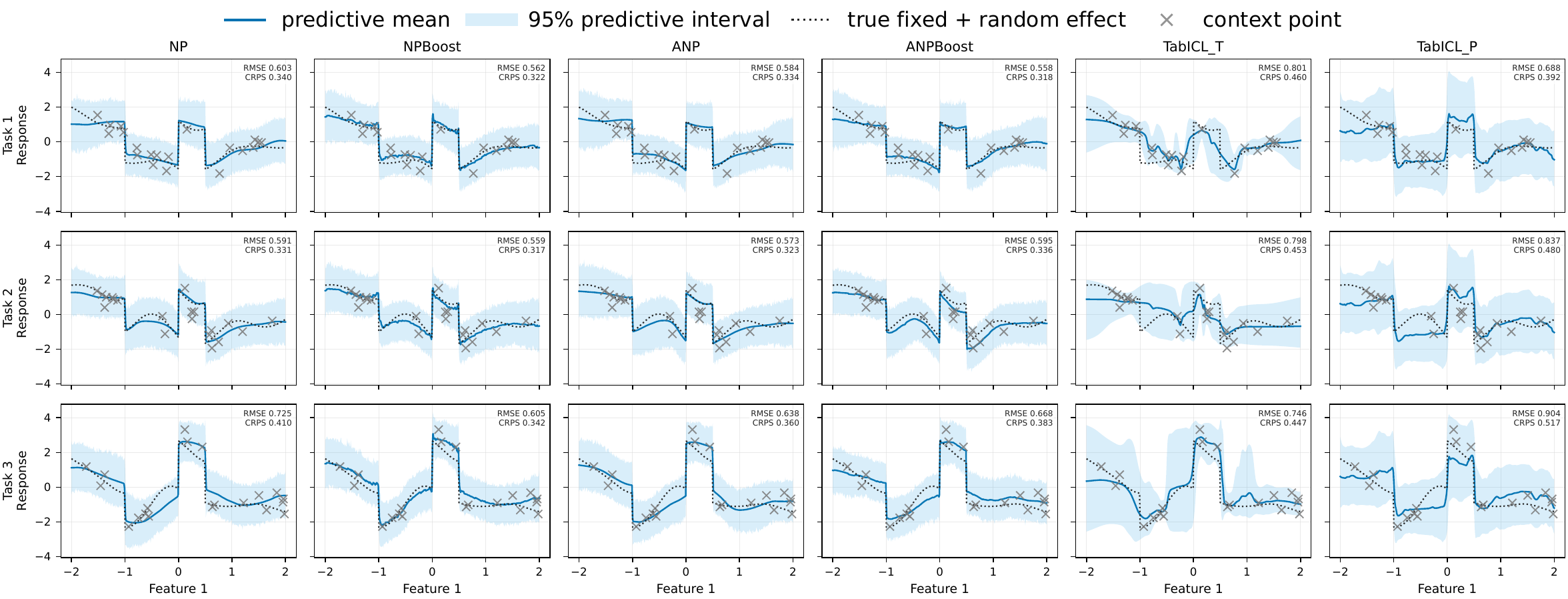}
    \caption{\textbf{Dense predictions (few-shot).} Dense prediction plot for the one-dimensional \dataset{reference} setting in the few-shot scenario.}
    \label{fig:reference_dense_predictions_few_shot}
\end{figure}

\clearpage
\section{Hardware Description}
\label{app:hardware}
\begin{table}[!htbp]
    \centering
    \renewcommand{\arraystretch}{1.2} 
    \small 
    \begin{tabular}{@{} l l >{\raggedright\arraybackslash}p{4.2cm} >{\raggedright\arraybackslash}p{4.2cm} l @{}}
        \toprule
        \textbf{Dataset} & \textbf{CPU Configuration} & \textbf{GPU Configuration} & \textbf{RAM} \\
        \midrule
        \dataset{reference} & 1x AMD EPYC 7763 (64 Cores) 2.45 GHz & 2x nVidia GeForce RTX 4090 (24 GB) & 512 GB \\
         \dataset{more-tasks}  & 1x AMD EPYC 9554 (64 Cores) 3.1 GHz & 2x nVidia RTX PRO 6000 Blackwell (96 GB) & 1152 GB \\
        \dataset{Brownian} & 1x AMD EPYC 7742 (64 Cores) 2.25 GHz & 2x nVidia GeForce RTX 3090 (24 GB) & 512 GB \\
        \dataset{Brownian-LN}  & 1x AMD EPYC 9554 (64 Cores) 3.1 GHz & 2x nVidia RTX PRO 6000 Blackwell (96 GB) & 1152 GB \\
        \dataset{irr-feat} & 1x AMD EPYC 7742 (64 Cores) 2.25 GHz & 2x nVidia GeForce RTX 3090 (24 GB) & 512 GB \\
        \dataset{cars}  & 1x AMD EPYC 9554 (64 Cores) 3.1 GHz & 2x nVidia RTX PRO 6000 Blackwell (96GB) & 1152 GB \\
        \dataset{Spotify} & 1x AMD EPYC 9554 (64 Cores) 3.1 GHz & 2x nVidia RTX PRO 6000 Blackwell (96 GB) & 1152 GB \\
        \dataset{cows} & 2x Intel Xeon Gold 6254 (18 Cores) 3.1 GHz & 2x nVidia GeForce RTX 2080 Ti (12 GB) & 512 GB \\
        \dataset{bikes} & 1x AMD EPYC 7763 (64 Cores) 2.45 GHz & 2x nVidia GeForce RTX 4090 (24 GB) & 512 GB \\
        \bottomrule
    \end{tabular}
    \caption{\textbf{Summary of experimental hardware setup.} All models with NP components and the TabICL experiments were run on GPU. The other models were run on CPU.}
    \label{tab:hardware_experiments}
\end{table}

\end{document}